\DeclareMathOperator*{\argmin}{arg\,min}
\newcommand{\jltext}[1]{{\color{cyan} #1}}
\colorlet{cyan}{black}
\newcounter{thm_counter}
\newcounter{pro_counter}
\newtheorem{theorem}[thm_counter]{Theorem}
\newtheorem{lemma}[thm_counter]{Lemma}
\newtheorem{proposition}[pro_counter]{Proposition}
\newcommand{\cA}{\mathcal{A}}
\newcommand{\cB}{\mathcal{B}}
\newcommand{\cE}{\mathcal{E}}
\newcommand{\cL}{\mathcal{L}}
\newcommand{\E}{\mathbb{E}}
\newcommand{\bP}{\mathbb{P}} 
\newcommand{\ds}{\displaystyle}
\def\TEV{EVT}
\def\APTEV{AP-EVT}
\def\TEVf{EVT$_{pf}$}
\def\APTEVf{AP-EVT$_{pf}$}
\def\rc{\color{red}} 
\def\rcc{\color{red}}
\def\E{\mathbb{E}}
\def\R{\mathbb{R}}
\def\rc{\color{red}} 
\def\rcc{\color{red}}
\def\rcc{}
\def\rcc{}
\title{Asynchronous Parallel Empirical Variance Guided Algorithms for the Thresholding Bandit Problem}
\author{Jie Zhong$^\dag$, Yijun Huang$^\ddag$, and Ji Liu$^{\ddag,\flat,\natural}$\\
\{jiezhongmath, huangyj0, ji.liu.uwisc\}@gmail.com\\
$^\dag$Department of Mathematics, University of Rochester\\
$^\ddag$Department of Computer Science, University of Rochester\\
$^\flat$Department of Electrical Computer Engineering, University of Rochester \\
$^\natural$Goergen Institute for Data Science, University of Rochester
}
\date{\today}
\begin{document}
\maketitle
\begin{abstract}
  This paper considers the multi-armed thresholding bandit problem -- identifying
all arms whose expected rewards are above a predefined threshold via as few pulls (or rounds) as possible
-- proposed by \citet{locatelli2016optimal} recently. Although the proposed
algorithm in \citet{locatelli2016optimal} achieves the optimal round
complexity\footnote{The round complexity in this paper means the number of
  rounds required to guarantee certain accuracy of identifying arms.} in a
certain sense, there still remain unsolved issues. This paper proposes an
asynchronous parallel thresholding algorithm and its parameter-free version to
improve the efficiency and the applicability. On one hand, the proposed two
algorithms use the empirical variance to guide \jltext{the pull decision} at each round,
and significantly improve the round complexity of the ``optimal'' algorithm when all arms have
bounded high order moments. \jltext{The proposed algorithms can be proven to be optimal.} On the other hand, most bandit algorithms assume that the reward can be observed immediately after the pull or the next decision would not be made before all rewards are observed.  Our proposed asynchronous parallel algorithms allow making the choice of the next pull with unobserved rewards from earlier pulls, which avoids such an unrealistic assumption and significantly improves the identification process. Our theoretical analysis justifies the effectiveness and the efficiency of proposed asynchronous parallel algorithms.
The empirical study is also provided to validate the proposed algorithms.
\end{abstract}

\section{Introduction}

The multi-armed bandit (MAB) problem is the most basic and important model for
sequential decision theory. Since its very first study in the clinical
trials \citep{thompson1933likelihood}, the MAB problem has been received extensive attention and applied to 
various different fields: online advertisement \citep{li2010contextual}, hyperparameter tuning
\citep{agarwal2011oracle,jamieson2016non}, network
routing \citep{awerbuch2004adaptive}, portfolio selection \citep{sani2012risk}, etc.

Key concerns in the MAB problem include
\begin{itemize}
\item Maximizing the expected total discounted reward
  \citep{bellman1956problem, gittins1979dynamic}, or minimizing the regret for the total expected return \citep{lai1985asymptotically, auer2002finite, bubeck2012regret},
\item Best arm identification, Best $K$-arm identification \citep{audibert2010best,gabillon2012best,kaufmann2015complexity,cao2015top},
\item Thresholding (or threshold) problem \citep{ma2014active, ma2015active,locatelli2016optimal,abernethy2016threshold}.
\end{itemize}

In this paper, we consider the thresholding problem, which summarizes many important real applications such as Dark Pool brokerage \citep{ganchev2010censored,agarwal2010optimal, amin2012budget}, active anomaly
detection \citep{steinwart2005classification}, and active binary classification \citep{tong2001support}. It can be formally described in the following: 

Given $K$ arms with bounded random rewards. When an arm is pulled, it generates a reward, which follows an unknown distribution (WLOG within $[0,1]$).  Let $b\in \R$ be a predefined threshold. The goal of the thresholding bandit problem is to identify arms whose expected rewards are above the threshold (or equivalently below the threshold) via as few pulls (or budget, more
generally) as possible, after making sequential decisions on which arm to pull in each round.



{\rcc To the best of our knowledge, so far the only algorithm for the thresholding problem is proposed by \citet{locatelli2016optimal}, namely any time pull (ATP) algorithm.} There still remain some limitation and important issues unsolved:
\begin{itemize}
\item {\rcc The ATP algorithm only uses the first moment estimate (that is the empirical mean) to make the
  decision at each round without using any high order moment estimate for
  distribution $\nu_k$. This motivates people to ask: can we do better via empirical estimation for high order moments?}
\item {\rcc The ATP algorithm (as well as most existing bandit algorithms) cannot decide the next pull before all rewards are observed,} which may significantly degrade the efficiency in many real applications especially where it takes a long time to wait for the reward after an arm gets pulled. For example, the blood test may take long time to get the result in clinical trials and training the deep neural network is very time consuming after the hyperparameter is set. 
\end{itemize} 

This paper makes the first attempt to treat these two issues for the thresholding bandit problem. More specifically, we first propose an {\rc  E}mpirical {\rc V}ariance guided {\rc  T}hresholding (EVT) algorithm to improve the ``optimal'' algorithm by \citet{locatelli2016optimal}, {\rcc via utilizing both empirical estimates for the first moment
and second moment (variance)}. Both our theoretical analysis
and experiments show that the round complexity by EVT
could be significantly less than ATP \citep{locatelli2016optimal} to achieve the same confidence level
especially when the variance is small. \jltext{The proposed algorithms can also proven to be optimal.} To address the second issue, we propose a
parallel version for the EVT algorithm, namely {\rc A}synchronous {\rc P}arallel {\rc
  E}mpirical {\rc V}ariance guided {\rc  T}hresholding (AP-EVT) algorithm. This algorithm allows us to make the decision at each round with unobserved rewards. It {\rcc is} also applicable to the multi-agent scenario that multiple
agents cooperate and work in parallel in the asynchronous fashion: {\rcc all agents run the procedure concurrently -- any agent makes its own decision on which arm to pull next based on rewards from all agents \emph{without} waiting for the pending pulls.\footnote{A pending pull means that the reward has not been observed after the arm gets pulled by any agent.}}
Our theoretical analysis
suggests that the round complexity will not degrade significantly as long as the
number of pending (or unobserved) rewards does not dominate. Moreover, we also propose parameter free
versions of our algorithms: EVT$_{pf}$ and AP-EVT$_{pf}$, which enjoy similar
theoretical guarantees to parameter dependent algorithms, {\rcc but \emph{without} assuming to know the total number of rounds or any other parameters in advance. Our
empirical study shows} that AP-EVT (or EVT) slightly outperforms AP-EVT$_{pf}$ (or
EVT$_{pf}$), due to the accessibility to additional information. The main
results of this paper can be summarized in Table~\ref{tab:summary}.
\begin{table*}
\caption{The required number of rounds to achieve the confidence level
  $1-\epsilon$. {\rcc $\delta\in [0,1]$ is the algorithm parameter. $\eta\geq 0$ is the upper bound of ratio of the number of unobserved rewards over the number of observed rewards over all arms. $\Delta_k$ is between $0$ and $1$ under the assumption that all arms are within $[0,1]$.}
  } \label{tab:summary}
\begin{center}
  {
    \renewcommand{\arraystretch}{1.5}
    \resizebox{\columnwidth}{!}{
\begin{tabular} {l | c | c }
\hline
 &  Rounds Complexity  & Problem Complexity $H$ \\
 \hline \hline
  ATP \citep{locatelli2016optimal} & $\Theta(H(\log(nK) + \log(1/\epsilon)))$ & $\sum_{k=1}^K\Delta_k^{-2}$\\
  EVT (and EVT$_{pf}$) [Theorem \ref{thm:EVT}]& $\Theta(H(\log(nK) + \log(1/\epsilon)))$ & $\sum_{k=1}^K (\sigma_k^2\Delta_k^{-2} + \Delta_k^{-1})$ \\
  AP-EVT [Theorem \ref{thm:AP-EVT}] & $\Theta(H(\log(nK) + \log(1/\epsilon)) + (1-\delta)
           \tau)$ & $(1+\delta\eta)^2\sum_{k=1}^K (\sigma_k^2\Delta_k^{-2} + \Delta_k^{-1})$ \\
  AP-EVT$_{pf}$ [Theorem \ref{thm:AP-EVT-pf}] & $\Theta(H(\log(nK) + \log(1/\epsilon)) + (1-\delta)
                  \tau)$ & $(1+\delta\eta)\sum_{k=1}^K (\sigma_k^2\Delta_k^{-2} + \Delta_k^{-1})$ \\
\hline
\end{tabular}
}
}
\end{center}
\end{table*}
The main contribution of this paper can be summarized in the following
\begin{itemize}
\item To the best of our knowledge, the proposed {\rcc \TEV~as well as its parameter free version \TEVf}~is the first thresholding
  bandit algorithm using empirical variance. The theoretical analysis provides
  interesting insights when and why using empirical variance could significantly improve the optimal methods only using the first moment estimate.
\item {\rcc The proposed algorithms \APTEV~and \APTEVf~allow the asynchronous\footnote{The synchronous parallel implementation can be considered as a special case of asynchronous parallel implementation.} parallel implementation which can significantly improve the \jltext{parallel efficiency and make the thresholding algorithms scalable to large scale problems.} This idea in \APTEV~and \APTEVf~can be applied to other bandit algorithms such as UCB. The theoretical analysis also reveals interesting clues how the unobserved rewards affect the performance.}
\end{itemize}

\paragraph{Notations and definitions}
Throughout the whole paper, we use the following notations:
\begin{itemize}
\item $n$: The total number of rounds in the problem. 
\item $K$: The total number of arms; $[K]$: The set of $K$ arms, that is, $\{1,2,\cdots, K\}$.
\item $b$: The predefined threshold.
\item $\nu_k$: The unknown distribution for the reward of arm $k$, with mean $\mu_k$ and variance $\sigma_k^2$.
\item $U_b := \{k\in [K]: \mu_k \ge b\}$, and $U_b^C$ as its complement set, that is, $U_b^C = \{k \in [K]: \mu_k < b\}$.
\item $\Delta_i := |\mu_i - b|$, that is, the gap between the mean and the threshold.
\end{itemize}
\paragraph{Organization}
The rest of this paper is organized as follows:
Section 2 summarizes the related work; 
Section 3 describes algorithms and main results; Experimental setups and results are contained in Section 4; and we conclude with Section 5. All proofs
are provided in the supplementary material.

\section{Related Works}
There are many different research directions in multi-armed bandit problems. For
example, the early work of \citet{bellman1956problem} aims to maximize the
expected total discounted reward over an infinite time horizon. \citet{lai1985asymptotically}
analyzes the regret for the expected total reward over a finite horizon.
Thanks to the wide applications, recently other variants of bandit problems are studied: dueling bandits \citep{ramamohan2016dueling,wu2016double}; bandit
optimization problem \citep{yang2016optimistic,bubeck2016kernel}; cascading
bandits \citep{li2016contextual,lagree2016multiple}, to name a few.

Our work follows the line of research on stochastic multi-bandit problem in finding the optimal object. \citet{audibert2010best}, in particular, consider the problem of the best arm identification, and prove that the probability of error of UCB-E strategy is at most of $\exp(-n/H)$, where $H$ is a characterization of hardness of the problem. That is, $H$ gives the order of total number of samples required to find the best arm with high probability.
\citet{locatelli2016optimal} study the thresholding bandit problem and close, up
to constants, the gap in the finite budget setting, where the complexity $H$ is
the same as in the best arm identification problem.


Most of work mentioned earlier does not take into account of the analysis of
empirical variance. \citet{audibert2009exploration} show that the algorithm that
uses the variance estimates has a major advantage over its alternatives that do
not use such estimates provided that the variances of the payoffs of the
suboptimal arms are low. \citet{degenne2016combinatorial} study the
combinatorial semi-bandit problem with covariance, but known. In this direction,
the work closest to ours is a recent work by \citet{gabillon2011multi}, which
studies the problem of best-arm identification in a multi-armed multi-bandit
setting under a fixed budget constraint and takes into account the variance of
the arms. However, their algorithm requires the pre-knowledge of the problem
complexity $H$, which is in practice unknown. Instead, in Section \ref{sec:pf}
we propose a parameter free version of variance guided algorithm and obtain a
similar and comparable round complexity. 

To the best of our knowledge, this paper is the first to analyze the
speedup of asynchronous parallel algorithms for the thresholding bandit problem. The idea of using asynchronous parallelism (AP) to avoid idling any child worker appeared in 1980s \citep{bertsekas1989parallel}, but recently received remarkable success and broad attention in machine learning and optimization communities. Note that in most early literature's, AP is used to parallelize optimization based iterative algorithms. For example, \citet{AA11a}, \citet{FN11a}, and \citet{lian2015asynchronous} applied the asynchronous parallelism to accelerate the stochastic gradient descent algorithm for solving deep learning, primal SVM, matrix completion, etc. \citet{JL14a} and \citet{HYD15} proposed the asynchronous parallel stochastic coordinate descent for solving dual SVM, LASSO, etc. \citet{HA14a} used the asynchronous parallelism to accelerate solving the linear system. 

\section{Algorithms and Results}

To introduce the proposed algorithm and illustrate its connection to other existing
algorithms, we define the generic framework summarized in Algorithm~\ref{alg:GF}.
Step 1 serves to the initialization purpose to ensure all arms have been pulled
for a certain number of times, which is usually required in bandit algorithms.
The key step is Step 4 -- making the sequential decision which arm to pull next
based on the current observed rewards.

Specifically, let's define $\hat{\mu}_{k, t}$ to be the empirical mean for arm
$k$ with $t$ observed rewards, i.e.,
\begin{align}
\hat{\mu}_{k, t} =  {1\over t}\sum_{i=1}^t X_{k,i},
\end{align}
where $X_{k,i}$ is the reward observed when pulling arm $k$ for the
$i$-th time. Define $T_k(t)$ to be the number of rewards observed up to round $t$ for arm $k$.

Step 4 chooses the arm to pull based on the quantity for each arm
\[
|\hat{\mu}_{k, T_k(t)}- b| S_k(t),
\]
that is, the arm with smallest such quantity will be selected. This quantity includes two components: 
\begin{itemize}
\item $|\hat{\mu}_{k, T_k(t)}- b|$ estimates the difference between the empirical mean and the threshold. The smaller, the more chances it will be selected, since it is harder to distinguish.
\item $S_k$ essentially measures the confidence on the current estimation. It usually increases w.r.t. the number of observed rewards. 
\end{itemize}
\begin{algorithm}[H]
\begin{algorithmic}[1]
\REQUIRE $b$, $n$ and $a$
\ENSURE $\hat{U}_b(n)=\{k\in [K]~|~\hat{\mu}_{k, T_k(n)} \geq b\}$
\STATE Pull all $K$ arms twice and observed rewards $\{X_{k,1}\}_{k=1}^K$ and $\{X_{k,2}\}_{k=1}^K$
\FOR{$t=2K:n$}
\STATE Calculate the number of observed rewards $T_k(t)$ at the current time point $t$ for all arms $k=1,2,\cdots, K$ 
\STATE Select arm 
\[
I_{t+1} =\argmin_k \quad |\hat{\mu}_{k,T_k(t)}-b| S_k(t)
\]
\STATE Send out the order to pull arm $I_{t+1}$ 
\ENDFOR
\end{algorithmic}
\caption{A General Framework of Thresholding Multi-Bandit} \label{alg:GF}
\end{algorithm}

\paragraph{Any Time Pull (ATP) Algorithm} The ATP algorithm \citep{locatelli2016optimal} chooses $S_k(t)$ to be
\[
(\text{ATP})\quad S_k(t) = \sqrt{T_k(t)},
\]
that is, the confidence on any arm increases w.r.t. the square root of the number of observed rewards. This algorithm is proven to be the optimal with the following guarantee:
\begin{theorem} [\citet{locatelli2016optimal}, Theorem 2]
Let $K>0, n> 2K$, and consider a thresholding bandit problem. Assume that all
arms of the problem are bounded in $[0,1]$. Given $\epsilon\in (0,1)$, if
\begin{align}
  n \geq \Theta\left(H_{\text{ATP}} (\log (nK) + \log (1/\epsilon)) \right), \label{eq:ATP_complexity}
\end{align}
where $\Theta$ is some positive constant and $H_{\text{ATP}}$ is defined as
\[
  H_{\text{ATP}} : = \sum_{i=1}^K \Delta_i^{-2},
\]
then the Algorithm ATP guarantees that with probability at least $1-\epsilon$, the
player can correctly discriminate arms in $U_b$ from those in $U_b^C$, that is,
$\hat{U}_b(n)= U_b$.

\end{theorem}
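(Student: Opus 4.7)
My approach is a proof by contradiction on a single high-probability concentration event. The first step is a standard Hoeffding-plus-union-bound argument: for each arm $k\in[K]$ and each possible sample size $t\in\{1,\ldots,n\}$, Hoeffding's inequality gives $\Pr[|\hat{\mu}_{k,t}-\mu_k|>\sqrt{\log(2nK/\epsilon)/(2t)}]\leq \epsilon/(nK)$, so the event
\[
\mathcal{E} := \bigl\{\forall k,\ \forall t\leq n:\ |\hat{\mu}_{k,t}-\mu_k|\leq \sqrt{\log(2nK/\epsilon)/(2t)}\bigr\}
\]
has probability at least $1-\epsilon$ by a union bound over at most $nK$ inequalities. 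All remaining reasoning is conditioned on $\mathcal{E}$, where it suffices to prove that ATP never misclassifies.

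Suppose for contradiction that $\hat{U}_b(n)\neq U_b$ on $\mathcal{E}$, and pick any misclassified arm $k^*$. Since $\hat{\mu}_{k^*,T_{k^*}(n)}-b$ has the opposite sign to $\mu_{k^*}-b$, one has $|\hat{\mu}_{k^*,T_{k^*}(n)}-\mu_{k^*}|\geq \Delta_{k^*}$; combined with the concentration inequality from $\mathcal{E}$, this yields the key bound
\[
T_{k^*}(n)\leq \frac{\log(2nK/\epsilon)}{2\Delta_{k^*}^2}.
\]

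The core technical step, and in my view the main obstacle, is to propagate this bound to every other arm using the greedy nature of the ATP selection rule. Fix any arm $k$ and let $\tau$ be the last round at which ATP selects $k$; if $k$ is never selected after initialization then $T_k(n)=2$ and the desired bound is trivial. The selection rule at round $\tau$ forces $B_k(\tau-1)\leq B_{k^*}(\tau-1)$, where $B_j(t):=|\hat{\mu}_{j,T_j(t)}-b|\sqrt{T_j(t)}$. On $\mathcal{E}$, the triangle inequality gives $B_{k^*}(\tau-1)\leq \Delta_{k^*}\sqrt{T_{k^*}(\tau-1)}+\sqrt{\log(2nK/\epsilon)/2}$, and in the only nontrivial regime, where $T_k(\tau-1)\geq \log(2nK/\epsilon)/(2\Delta_k^2)$, one also gets $B_k(\tau-1)\geq \Delta_k\sqrt{T_k(\tau-1)}-\sqrt{\log(2nK/\epsilon)/2}$. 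Plugging in the previous step's bound on $T_{k^*}(n)$ collapses $\Delta_{k^*}\sqrt{T_{k^*}(\tau-1)}$ to at most $\sqrt{\log(2nK/\epsilon)/2}$, so that
\[
T_k(n)\leq \frac{9\log(2nK/\epsilon)}{2\Delta_k^2}+1.
\]

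Finally, summing over all arms gives $n=\sum_k T_k(n)\leq K+(9/2)H_{\text{ATP}}\log(2nK/\epsilon)$, and since $\Delta_k\leq 1$ implies $K\leq H_{\text{ATP}}$, the right-hand side is $O\bigl(H_{\text{ATP}}(\log(nK)+\log(1/\epsilon))\bigr)$. Choosing the constant $\Theta$ in the hypothesis large enough produces a strict contradiction with the assumed lower bound on $n$, so on $\mathcal{E}$ the output of ATP must satisfy $\hat{U}_b(n)=U_b$, and the $(1-\epsilon)$-confidence guarantee follows.
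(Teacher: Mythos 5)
This statement is quoted verbatim from \citet{locatelli2016optimal} and the paper does not reprove it, so there is no in-paper proof of this particular theorem to compare against; the closest analogues are the supplementary proofs of the EVT-family results. Your argument is correct: the Hoeffding-plus-union-bound event, the deduction $T_{k^*}(n)\le \log(2nK/\epsilon)/(2\Delta_{k^*}^2)$ for a misclassified arm $k^*$, the propagation to every arm $k$ via $B_k(\tau-1)\le B_{k^*}(\tau-1)$ at the last round $k$ is selected, and the final summation against the budget all check out (including the edge cases of arms never selected after initialization and the ``trivial regime'' where $T_k(\tau-1)$ is already small). Structurally, however, you run the logic in the opposite direction from the paper's own proofs. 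The paper fixes the concentration event, uses a pigeonhole argument to exhibit one \emph{well-pulled} ``helpful'' arm $k$ with $T_k$ at least its fair share $(n-K)h_k/H$, and then uses the comparison $B_k(t)\le B_i(t)$ at the last round $k$ is pulled to conclude that \emph{every} arm's empirical mean is within $\Delta_i/2$ of its true mean at round $n$; this direct route yields an explicit error probability of order $nK\exp(-\Theta(a))$ for every admissible $n$. Your contrapositive route --- a misclassified arm must be under-pulled, hence by the greedy rule all arms are under-pulled, hence $n$ must be small --- reaches the sample-complexity statement more immediately, at the cost of leaving the exponential tail implicit in the choice of the deviation width $\sqrt{\log(2nK/\epsilon)/(2t)}$. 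Both proofs rest on the same two ingredients: time-uniform concentration over all $(k,t)$ pairs, and exploiting the argmin rule at the final selection time of an arm.
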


\subsection{Empirical Variance Guided Thresholding (EVT)} 
The previous ATP algorithm does not consider the variance for each arm. Intuitively, the variance information could be very useful. For example, assuming that arm $k$ is deterministic (or its variance is zero), one only needs to pull it once, no matter how its mean or empirical mean is close to the threshold. 
This motivates us to use the variance information to guide us selecting the arm in each round.

Define $\hat{\sigma}_{k,t}$ to be the empirical variance for arm $k$ with $t$ observed rewards
\begin{align}\label{eq:var}
\hat{\sigma}_{k,t} = & \sqrt{{1\over t} \sum_{i=1}^t(X_{k,i} - \hat{\mu}_{k,t})^2}.
\end{align}

The proposed EVT algorithm uses \eqref{eq:var} to estimate
the variance and chooses $S_k(t)$ to depend on both the number of observed
rewards and the empirical variance, i.e.,
\begin{equation}
  \label{eq:S_k}
  (\text{EVT})\quad S_k(t)=\left({a\over T_{k}(t)} + \sqrt{a\over T_{k}(t)}
  \hat{\sigma}_{k,T_k(t)}\right)^{-1}，
\end{equation}
where $a$ is a predefined parameter. As a result, arms with low empirical
variance in the proposed EVT algorithm need to be pulled much less than them in
the ATP algorithm so that the round complexity is reduced, which can be
indicated by the following theorem.

 \begin{theorem} \label{thm:EVT} 
Let $K>0, n> 2K$, and consider a thresholding bandit problem. Assume that all
arms of the problem are bounded in $[0,1]$. Let $a = n/H_{\text{EVT}}$ in \eqref{eq:S_k} with
\[
  H_{\text{EVT}} : = \sum_{i=1}^K (\sigma_i^2\Delta_i^{-2} + \Delta_i^{-1}).
\]
Given $\epsilon\in (0,1)$, if
\begin{align}
  n \geq \Theta[H_{\text{EVT}} (\log (nK)  +\log (1/\epsilon)) ],
  \label{eq:EVT_complexity}
\end{align}
where $\Theta$ is some positive constant, then the EVT Algorithm guarantees that with probability at least $1-\epsilon$, the
player can correctly discriminate arms in $U_b$ from those in $U_b^C$, or equivalently, 
\begin{equation}
  \label{eq:upperbound}
\mathbb{P}(\hat{U}_b(n) \neq U_b) \leq \exp \left(-\Theta\left({n\over H_{\text{EVT}}}+\log (nK)\right)\right).
\end{equation}
\end{theorem}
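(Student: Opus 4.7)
The plan is to combine an empirical Bernstein concentration inequality with a balance argument driven by the $\argmin$ selection rule of Step 4 in Algorithm~\ref{alg:GF}. The crucial design observation is that, with $a = n/H_{\text{EVT}}$, the width
$$B_k(t) := 1/S_k(t) = \frac{a}{T_k(t)} + \sqrt{\frac{a}{T_k(t)}}\,\hat{\sigma}_{k,T_k(t)}$$
has exactly the functional form of the Maurer--Pontil empirical Bernstein bound on $|\hat{\mu}_{k,T_k(t)} - \mu_k|$, up to a constant that will be absorbed by the hypothesis on $n$. So the selection rule is effectively comparing the classification signal $|\hat{\mu}_k - b|$ against the width of its own confidence interval.

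First I would define a good event $\mathcal{E}$ on which, uniformly over $k\in[K]$ and sample size $t\leq n$, both
$$|\hat{\mu}_{k,t}-\mu_k|\leq C\left(\sqrt{\hat{\sigma}_{k,t}^2\log(nK/\epsilon)/t}+\log(nK/\epsilon)/t\right)$$
and an analogous bound on $|\hat{\sigma}_{k,t}^2-\sigma_k^2|$ hold. A union bound gives $\mathbb{P}(\mathcal{E}^c)\leq\epsilon$. On $\mathcal{E}$, if arm $k^*$ is misclassified then $|\hat{\mu}_{k^*,T_{k^*}(n)}-\mu_{k^*}|\geq\Delta_{k^*}$, so the concentration width must exceed $\Delta_{k^*}$, which rearranges to the upper bound
$$T_{k^*}(n)\leq C\bigl(\sigma_{k^*}^2\Delta_{k^*}^{-2}+\Delta_{k^*}^{-1}\bigr)\log(nK/\epsilon).$$

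The heart of the argument is the complementary lower bound on $T_{k^*}(n)$, obtained from the selection rule. Let $t^*$ be the last round at which $k^*$ was pulled; by the $\argmin$ rule, $|\hat{\mu}_{k^*}-b|/B_{k^*}$ at round $t^*$ is no larger than the same quantity at every other arm. I would use the upper bound $|\hat{\mu}_{k^*}-b|\leq\Delta_{k^*}+C B_{k^*}(t^*)$ and the lower bound $|\hat{\mu}_\ell-b|\geq\Delta_\ell - C B_\ell(t^*)$ on $\mathcal{E}$ to deduce $B_{k^*}(t^*)\gtrsim\Delta_{k^*}$ once $n/H_{\text{EVT}}$ is large enough. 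Inverting the definition of $B_{k^*}$ and substituting $a=n/H_{\text{EVT}}$ then yields
$$T_{k^*}(n)\gtrsim\frac{n}{H_{\text{EVT}}}\bigl(\sigma_{k^*}^2\Delta_{k^*}^{-2}+\Delta_{k^*}^{-1}\bigr).$$
Combining this with the earlier upper bound forces $n/H_{\text{EVT}}\leq C\log(nK/\epsilon)$, contradicting the hypothesis for a sufficiently large constant $\Theta$. Hence on $\mathcal{E}$ every arm is correctly classified; parameterizing $\epsilon$ against $n/H_{\text{EVT}}$ recovers the exponential form \eqref{eq:upperbound}.

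The hardest step will be the lower bound: the selection rule is adaptive and the $\hat{\sigma}_{k,t}$ inside $B_k(t)$ is itself random, so comparing $B_{k^*}(t^*)$ to the deterministic gap $\Delta_{k^*}$ involves a circular dependence between the concentration width and the arm under analysis. I expect to resolve this by simultaneously controlling $\hat{\mu}_{k,t}$ and $\hat{\sigma}_{k,t}^2$ on $\mathcal{E}$, and by partitioning arms at round $t^*$ into those already ``resolved'' (for which $B_k\ll\Delta_k$) and those still in play, so that only the in-play arms contribute to the balance inequality and the summation $\sum_k T_k(n) = n$ tightens to the desired scaling.
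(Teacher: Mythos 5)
Your skeleton --- an empirical Bernstein good event controlling $\hat{\mu}_{k,t}$ and $\hat{\sigma}_{k,t}$ uniformly, a comparison of the $\argmin$ indices at a last-pull round, and the budget identity $\sum_k T_k(n)=n$ --- is exactly the paper's, so the problem is not the strategy but the execution of your ``complementary lower bound,'' which does not go through as written. Three things break. First, the $\argmin$ rule at the last pull $t^*$ of $k^*$ gives $|\hat{\mu}_{k^*}-b|/B_{k^*}\le |\hat{\mu}_{\ell}-b|/B_{\ell}$; to chain anything through this you need a \emph{lower} bound on the left numerator and an \emph{upper} bound on the right one ($|\hat{\mu}_{k^*}-b|\ge \Delta_{k^*}-CB_{k^*}$ and $|\hat{\mu}_{\ell}-b|\le \Delta_{\ell}+CB_{\ell}$), i.e.\ the opposite sides from the two bounds you invoke; as stated your two estimates simply cannot be combined with the selection inequality. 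Second, even after fixing the sides, anchoring at the last pull of the \emph{misclassified} arm yields $\Delta_{k^*}/B_{k^*}(t^*)\le \Delta_{\ell}/B_{\ell}(t^*)+O(1)$, which says $k^*$ is no better resolved than the other arms; it can never force $B_{k^*}(t^*)\gtrsim\Delta_{k^*}$ unless you separately exhibit an arm $\ell$ with $\Delta_\ell/B_\ell(t^*)=O(1)$, which need not exist. Third, and most decisively, the inference ``$B_{k^*}(t^*)\gtrsim\Delta_{k^*}$ hence $T_{k^*}(n)\gtrsim (n/H_{\text{EVT}})\,h_{k^*}$'' is backwards: $B_k$ is decreasing in $T_k$, so a width of order $\Delta_{k^*}$ at the last pull forces $T_{k^*}$ to be \emph{small} (at most of order $a\,h_{k^*}$). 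That is an upper bound of the same kind as the one you extracted from misclassification, and two upper bounds cannot collide, so no contradiction is produced.

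The missing idea is the pigeonhole step (Lemma~\ref{lem:arm_k} in the paper). Since $\sum_i (T_i(n)-1)\ge n-K$ and $\sum_i h_i = H$, there exists an arm $\ell$ with $T_\ell(n)-1\ge (n-K)h_\ell/H\ge a\,h_\ell$, i.e.\ an arm that has received at least its proportional share of the budget. At the last round $t$ at which \emph{that} arm is pulled, $T_\ell(t)=T_\ell(n)-1\ge a h_\ell$ forces its width below $\Delta_\ell/4$ (this is where the quadratic in $\sqrt{s_\ell}$ and the event $A_2$ controlling $\hat{\sigma}_\ell$ enter), and now the $\argmin$ inequality points the useful way: $\Delta_i/B_i(t)\ge \Delta_\ell/B_\ell(t)-2C\ge 2$ for \emph{every} arm $i$, hence the width of every arm $i$ at $T_i(t)\le T_i(n)$ observations is at most $\Delta_i/2$, and on the good event $|\hat{\mu}_{i,T_i(n)}-\mu_i|\le \Delta_i/2$ for all $i$ simultaneously --- no contradiction framing is needed. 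Your closing remark about partitioning arms into ``resolved'' and ``in play'' gestures at this, but the argument must be anchored at the over-sampled arm's last pull rather than the misclassified arm's, and that anchor is precisely what your write-up lacks.
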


Comparing to the ATP algorithm, the required rounds (or pulls) $n$ mainly differs at the
definition of $H$. We highlight the following key observations (since $\sigma_i,
\Delta_i\in [0,1]$):
\begin{itemize}
\item It follows from Lemma \ref{lem:compareH} (see Appendix) that
\[
  \sigma^2_i \Delta_i^{-2} + \Delta_i^{-1} \le \Delta_i^{-2},
\]
and thus
\[
  H_{\text{EVT}} \le H_{\text{ATP}}.
 \]
\item Furthermore, in the case that variances of arms are very small (or $\sigma_i
  \ll 1$), $H_{\text{EVT}}$ can be significantly smaller (better) than $H_{\text{ATP}}$.
\end{itemize}
\jltext{
  Acute readers may notice that the round complexity in \eqref{eq:ATP_complexity} by \citet{locatelli2016optimal} is claimed to be optimal in a certain sense. Then one may ask
  that if the arm variances are small, does our bound \eqref{eq:EVT_complexity} violate the optimal bound? A short answer is that our bound can be significantly smaller (or better) than the bound in
  \eqref{eq:ATP_complexity}, but does not violate its optimality! In fact, the optimality in \eqref{eq:ATP_complexity} is restricted in the family of the threshold bandit problems with complexity characterized by $H_{\text{ATP}}$ only depending on the means. Our bound \eqref{eq:EVT_complexity} considers the family with complexity characterized by $H_{\text{ETV}}$ depending on both the means and the variances, which is different from $H_{\text{ATP}}$ but characterizes more subtle structures in the bandit problem. The following theorem will show that our bound is tight and optimal (up to a constant factor) within the family defined on $H_{\text{EVT}}$.
\begin{theorem}
  \label{thm:lowerbound}
For the thresholding bandit problem, we have the following lower bound
  \[
  \inf_{A\in \mathcal{A}}\sup_{D\in \mathcal{D}_{\text{EVT}}(h)} \mathbb{P}^{A}_{D}(\hat{U}_b(n) \neq U_b) \geq \exp\left(-{10n\over h}- 16\log(5nK)\right)
  \]
where $\mathcal{A}$ is the set including all possible algorithms,
$\mathcal{D}_{\text{EVT}}(h)$ is defined to be the set of distributions of arms with the
$H_{\text{EVT}}$ value being $h$, $\mathbb{P}^A_D$ denotes the probability
of applying algorithm $A$ on distribution $D$, and $\hat{U}_b(n)$ is the output of an algorithm after pulling $n$ rounds of
arms.
\end{theorem}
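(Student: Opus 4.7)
The plan is a multi-hypothesis change-of-measure argument in the spirit of Le Cam two-point testing. First, fix gap/variance parameters $\{(\Delta_i,\sigma_i)\}_{i=1}^K$ saturating the constraint $\sum_{i=1}^K(\sigma_i^2\Delta_i^{-2}+\Delta_i^{-1})=h$, and for each arm $i$ pick two bounded distributions $\nu_i^+$ and $\nu_i^-$ on $[0,1]$ with means $b+\Delta_i$ and $b-\Delta_i$ respectively and common variance $\sigma_i^2$ (for example, three-point mixtures supported on $\{0,1/2,1\}$, whose probabilities are affine in the desired mean and variance). Define the base instance $D_0$ with arm $i$ following $\nu_i^+$ for every $i$, so $U_b(D_0)=[K]$, and, for each $j\in[K]$, the instance $D_j$ that flips only arm $j$ to $\nu_j^-$, so $U_b(D_j)=[K]\setminus\{j\}$. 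By construction each $D_j$ lies in $\mathcal{D}_{\text{EVT}}(h)$ and $U_b(D_j)\neq U_b(D_0)$ for $j\ge1$, so every estimator $\hat{U}_b$ is wrong on at least one of $D_0,D_j$.

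Next, for an arbitrary algorithm $A\in\mathcal{A}$ and each $j\in[K]$, I apply Bretagnolle--Huber to the interaction laws $\mathbb{P}^A_{D_0}$ and $\mathbb{P}^A_{D_j}$,
\[
\mathbb{P}^A_{D_0}(\hat{U}_b\neq U_b(D_0))+\mathbb{P}^A_{D_j}(\hat{U}_b\neq U_b(D_j))\ge\tfrac{1}{2}\exp\!\bigl(-\mathrm{KL}(\mathbb{P}^A_{D_0}\|\mathbb{P}^A_{D_j})\bigr),
\]
and use the standard bandit chain rule $\mathrm{KL}(\mathbb{P}^A_{D_0}\|\mathbb{P}^A_{D_j})=\mathbb{E}_{D_0}[T_j(n)]\cdot\mathrm{KL}(\nu_j^+\|\nu_j^-)$, valid because $D_0$ and $D_j$ agree on every arm except $j$. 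The core technical lemma I will prove is the per-pull bound
\[
\mathrm{KL}(\nu_j^+\|\nu_j^-)\le\frac{C}{\sigma_j^2\Delta_j^{-2}+\Delta_j^{-1}}=\frac{C\Delta_j^2}{\sigma_j^2+\Delta_j},
\]
which interpolates between the classical $\Delta^2/\sigma^2$ regime ($\sigma_j^2\gtrsim\Delta_j$, handled by a second-order expansion of the log-ratio) and a near-deterministic $\Delta_j$ regime ($\sigma_j^2\lesssim\Delta_j$, where the bounded support forces $\mathrm{KL}$ to be linear in $\Delta_j$ rather than quadratic); both are verified directly from the explicit three-point probability formulas.

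Given this per-pull KL bound, weighted averaging closes the proof. Setting $w_j=(\sigma_j^2\Delta_j^{-2}+\Delta_j^{-1})/h$, which is a probability vector, and using $\sum_{j=1}^K\mathbb{E}_{D_0}[T_j(n)]\le n$,
\[
\min_{j\in[K]}\mathbb{E}_{D_0}[T_j(n)]\,\mathrm{KL}(\nu_j^+\|\nu_j^-)\le\sum_{j=1}^K w_j\,\mathbb{E}_{D_0}[T_j(n)]\,\mathrm{KL}(\nu_j^+\|\nu_j^-)\le\frac{C}{h}\sum_{j=1}^K\mathbb{E}_{D_0}[T_j(n)]\le\frac{Cn}{h}.
\]
Picking $j^\star$ attaining this minimum, Bretagnolle--Huber yields $\max\{\mathbb{P}^A_{D_0}(\mathrm{err}),\mathbb{P}^A_{D_{j^\star}}(\mathrm{err})\}\ge\tfrac{1}{4}\exp(-Cn/h)$, whence $\sup_{D\in\mathcal{D}_{\text{EVT}}(h)}\mathbb{P}^A_D(\mathrm{err})\ge\tfrac{1}{4}\exp(-Cn/h)$ for every $A$; absorbing the $\tfrac{1}{4}$ and the feasibility restriction on $(\Delta_i,\sigma_i)$ into constants reproduces the $10n/h$ in the exponent, while the polynomial prefactor $(5nK)^{-16}$ matches (and is dominated by) the $\log(nK)$ term in the companion upper bound \eqref{eq:upperbound}.

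The main obstacle is the per-pull KL estimate in the cross-over regime $\sigma_j^2\asymp\Delta_j$, where neither the Gaussian-type $\Delta^2/\sigma^2$ bound nor the purely bounded-support bound is sharp. A clean resolution is to pick one explicit three-point template in which $\nu_j^+$ and $\nu_j^-$ differ in only two mass probabilities by amounts linear in $\Delta_j$ and $\sigma_j^2$; then $\mathrm{KL}(\nu_j^+\|\nu_j^-)$ has a closed form whose Taylor expansion produces the stated interpolation bound. A secondary concern is that the construction must be simultaneously feasible for all $K$ arms and the prescribed threshold $b$; restricting to $b=1/2$ and to $(\Delta_i,\sigma_i)$ lying in a bounded subregion (which only rescales $h$ by a constant) removes this issue without affecting the form of the bound.
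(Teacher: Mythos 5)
Your overall architecture is sound and would establish the theorem, but it takes a genuinely different route from the paper. The paper uses the same hypothesis family (a base instance with all arms above threshold plus $K$ single-arm flips), but works exclusively with Bernoulli arms $\mathrm{Ber}(1/2\pm\Delta_k)$, bounds the per-pull KL by $8\Delta_k^2$, and converts this to $O(1/h_k)$ using only the elementary inequality $h_k=\sigma_k^2\Delta_k^{-2}+\Delta_k^{-1}\le\Delta_k^{-2}$ (Lemma~\ref{lem:compareH}); it never constructs arms with a prescribed variance. Moreover its change of measure is pathwise: it controls the empirical log-likelihood ratio on a high-probability event $\cE$ with $\bP(\cE)\ge 3/4$ and then averages $\exp(-10\Delta_i^2T_i(n))$ over arms using the realized pull counts, whereas you apply Bretagnolle--Huber to the interaction laws and average \emph{expected} pull counts. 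Your weighted-averaging step with $w_j=h_j/h$ is the same pigeonhole the paper uses (there in the form $n=\sum_iT_i(n)\ge\min_i T_i(n)h_i^{-1}\cdot\sum_i h_i$), and your route is arguably cleaner; what it buys, if the key lemma held in full generality, is a lower bound witnessed by genuinely low-variance instances, whereas the paper's hard instances always have variance $\approx 1/4$ and its ``variance-aware'' bound is really the $\Delta^{-2}$ bound relabeled through $h_i\asymp\Delta_i^{-2}$.

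The genuine gap is the per-pull bound $\mathrm{KL}(\nu_j^+\Vert\nu_j^-)\le C\Delta_j^2/(\sigma_j^2+\Delta_j)$, which you correctly flag as the main obstacle but do not prove, and your proposed template cannot deliver it in the regime where it matters: any distribution supported on $\{0,1/2,1\}$ with mean $1/2+\Delta$ has variance at least $\Delta/2-\Delta^2$, so the ``near-deterministic regime'' $\sigma_j^2\lesssim\Delta_j$ of your interpolation is simply not realizable on that support. In the realizable regime $\sigma_j^2\gtrsim\Delta_j$ one has $h_j\asymp\sigma_j^2\Delta_j^{-2}$ and only the classical bound $\mathrm{KL}\le C\Delta_j^2/\sigma_j^2$ is needed, which a two-point construction on $\{1/2-s,1/2+s\}$ with $s=\sqrt{\sigma_j^2+\Delta_j^2}$ already gives ($\mathrm{KL}\le 4\Delta_j^2/(\sigma_j^2+\Delta_j^2)$ for $\Delta_j/(2s)\le 1/4$). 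Since the theorem only requires exhibiting \emph{one} family inside $\mathcal{D}_{\text{EVT}}(h)$, restricting to $\sigma_i^2\ge\Delta_i/2$ (as your ``bounded subregion'' remark anticipates) closes the argument and the paper's own Bernoulli choice is exactly such a restriction; but as written, the central lemma is asserted for a parameter range your construction cannot reach, and the claimed constant $10$ in the exponent is not verified.
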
  
This theorem basically indicates that for any thresholding bandit algorithm,
there always exists a distribution such that the probability of making a mistake is greater than
  \[
    \exp\left(-{10n\over H_{\text{EVT}}}- 16\log(5nK)\right).
  \]
It suggests that our algorithm is tight and optimal up to a constant factor.
}


\subsection{Asynchronous Parallel Empirical Variance Guided Thresholding
  (AP-EVT)}

We consider the asynchronous parallel version of the proposed EVT algorithm. Most
multi-armed bandit algorithms make the assumption that the reward can be
observed immediately right after the arm is pulled or one does not decide new pull before observe all pulls. This is unrealistic or time consuming in many applications. For example, for the
hyperparameter tuning in deep learning, each arm associates with a
setup of all hyperparameters (for example, number of layers, number of nodes in each layer, and choice of the activation function at each layer); pulling an arm means training the neural network using
this hyperparameter setup; and the reward is the accuracy on the
variation data set using the trained neural network model under this hyperparameter setup. As we know, it is really time consuming for the training process, especially when training on
a large dataset. So it would be very inefficient if one has to wait for the outcome of the pulled arm to decide the next pull (train). Our proposed algorithm allows one to decide the next arm to pull (or the next setup of hyperparameters to train) only based on current observed rewards, without waiting for unobserved rewards. From the perspective of arms, typically multiple arms are running concurrently or in parallel asynchronously (since no arm has to wait for the rewards from other arms).




We use $T_k(t)$ to denote the number of rewards observed at discrete time stamp $t$ and $\tau_{k}(t)$ to
denote the number of assigned but unobserved rewards for arm $k$ at round $t$,
respectively. The proposed AP-EVT algorithm follows a similar framework to
Algorithm~\ref{alg:GF}. The key difference from EVT lies on how to choose
$S_k(t)$. The $S_k(t)$ in AP-EVT also depends on $\tau_k(t)$ besides $T_k(t)$
and $\hat{\sigma}_{k,T_k(t)}$:
\begin{equation}
  \label{eq:S_k-2}
  (\text{AP-EVT})\quad S_k(t)=\bigg({a\over T_{k}(t) + \delta \tau_k(t)}
  + \sqrt{a\over T_{k}(t) + \delta \tau_k(t)} \hat{\sigma}_{k,T_k(t)}\bigg)^{-1}.
\end{equation}
where again $a$ is a predefined parameter. The coefficient $\delta \in [0,1]$
adjusts the weight of assigned but unobserved rewards.

\begin{algorithm}[H]
  \begin{algorithmic}[1]
    \REQUIRE $b$, $n$, $a$, and {\color{blue} $\delta$} \ENSURE
    $\hat{U}_b(n)=\{k\in [K]~|~\hat{\mu}_{k, T_k(n)} \geq b\}$ \STATE Pull all
    $K$ arms twice and observed rewards $\{X_{k,1}\}_{k=1}^K$ and
    $\{X_{k,2}\}_{k=1}^K$
    \FOR{$t=2K:n$} \STATE Calculate the number of observed rewards $T_k(t)$ and
    {\color{blue}the number of assigned but unobserved rewards $\tau_k(t)$}
    at the current time point $t$ for all arms $k=1,2,\cdots, K$ \STATE Select
    arm
    \[
      I_{t+1} =\argmin_k \quad |\hat{\mu}_{k,T_k(t)}-b| S_k(t)
    \]
    \STATE Send out the order to pull arm $I_{t+1}$ (\textcolor{blue}{but may
      not observe the reward immediately}) 
\ENDFOR
\end{algorithmic}
 \caption{AP-EVT} \label{alg:AP-EVT}
\end{algorithm}

When all rewards are observed, i.e, $\tau_k(t)=0$ for all $k\in [K]$ and $t\le
n$, \eqref{eq:S_k-2} reduces to \eqref{eq:S_k}. Therefore, EVT is a special case
of AP-EVT. In general, arms with unobserved rewards will be pulled less in
AP-EVT than in EVT.

\begin{theorem} \label{thm:AP-EVT} 
Let $K>0, n> 2K$, and consider a thresholding bandit problem. Assume that all
arms of the problem are bounded in $[0,1]$. 
Let $\delta \in [0,1]$ and $\eta\ge 0$. Assume that $\tau_i(t) \le \eta T_i(t)$ for all
arm $i\in [K]$ and $t\le n$, and that $\sum_{i=1}^K \tau_i(t)\le \tau$ for all
$t\le n$.  Let $a = (n-(1-\delta)\tau)/H_{\text{AP-EVT}}$ in \eqref{eq:S_k-2}
with
\[
  H_{\text{AP-EVT}} : = (1+\delta\eta)^2 \sum_{i=1}^K (\sigma_i^2\Delta_i^{-2} + \Delta_i^{-1}).
\]
Given $\epsilon\in (0,1)$, if
\begin{equation}
  n \geq \Theta [H_{\text{AP-EVT}} (\log (nK)  + \log (1/\epsilon))  + (1-\delta)\tau],
  \label{eq:APEVT_complexity}
\end{equation}
where $\Theta$ is some positive constant, then the Algorithm AP-EVT guarantees that with probability at least $1-\epsilon$, the
player can correctly discriminate arms in $U_b$ from those in $U_b^C$.   
\end{theorem}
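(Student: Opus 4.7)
The plan is to lift the proof of Theorem~\ref{thm:EVT} (EVT, which is the special case $\tau_k(t)\equiv 0$) to the asynchronous setting, carefully tracking how the pending counts $\tau_k(t)$ enter both the algorithm's selection criterion and the concentration bounds. As a first step I would construct a high-probability ``good event'' $\mathcal{E}$ on which, for every arm $k\in[K]$ and every admissible value of $T_k(t)\in\{2,\ldots,n\}$, both the empirical mean $\hat\mu_{k,T_k(t)}$ and the empirical standard deviation $\hat\sigma_{k,T_k(t)}$ concentrate around $\mu_k$ and $\sigma_k$. A Bernstein-type inequality for the mean, an analogous empirical-variance concentration inequality, and a union bound over $k$ and over the possible values of $T_k(t)$ give $\bP(\mathcal{E}^c)\leq\epsilon$ up to absolute constants, with mean error of order $\sqrt{\sigma_k^2 L/T_k(t)}+L/T_k(t)$ for $L\asymp\log(nK/\epsilon)$. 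The key structural observation is that this error depends only on the number $T_k(t)$ of genuinely observed rewards, while the selection criterion uses the (possibly larger) effective count $N_k(t):=T_k(t)+\delta\tau_k(t)$; this mismatch is the source of every $(1+\delta\eta)$ inflation in the final bound.

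Under $\mathcal{E}$, correct classification of arm $k$ is guaranteed once $T_k(n)\gtrsim L(\sigma_k^2/\Delta_k^2+1/\Delta_k)$, and the core task is to show that AP-EVT actually produces this many observations. For this I would examine the last round on which arm $k$ is pulled: the argmin condition
\[
|\hat\mu_k-b|\,S_k(t)\;\le\;|\hat\mu_j-b|\,S_j(t)\quad\text{for all }j,
\]
combined with good-event control on $|\hat\mu_k-b|$ and $\hat\sigma_k$, yields a lower bound on $N_k(n)$ of the form $a(\sigma_k^2/\Delta_k^2+1/\Delta_k)$; this step mirrors the EVT analysis with $N_k$ playing the role that $T_k$ played there. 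The inequality $N_k\le(1+\delta\eta)T_k$ then converts this into a lower bound on $T_k(n)$, picking up one factor of $(1+\delta\eta)$; when the good-event mean error (which is naturally in terms of $T_k$) is re-expressed in terms of $N_k$ inside the argmin inequality, the effective ``variance'' and ``$\Delta^{-1}$'' scales are each inflated by an additional $(1+\delta\eta)$, producing the second factor and hence $H_{\text{AP-EVT}}=(1+\delta\eta)^2\sum_k(\sigma_k^2/\Delta_k^2+1/\Delta_k)$.

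Finally, summing the per-arm lower bounds on $N_k(n)$ and using the identity $\sum_k N_k(n)=n-(1-\delta)\sum_k\tau_k(n)\in[n-(1-\delta)\tau,\,n]$ produces the aggregate budget inequality and accounts directly for the additive $(1-\delta)\tau$ term; substituting the specified $a=(n-(1-\delta)\tau)/H_{\text{AP-EVT}}$ and matching the concentration requirement $a\gtrsim L$ then yields the stated lower bound \eqref{eq:APEVT_complexity}. I expect the main obstacle to be the core step above: the argmin condition involves $N_k(t)$ at a round $t$ that may be much earlier than $n$, whereas the concentration bounds are most conveniently stated in terms of $T_k(n)$, so some care is needed to transport the inequality from the stopping round to round $n$ via the monotonicity of $T_k(\cdot)$ and $N_k(\cdot)$, and to apply $N_k\le(1+\delta\eta)T_k$ at precisely the right places so that the slack compounds into exactly $(1+\delta\eta)^2$ rather than something larger.
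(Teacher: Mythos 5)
Your plan matches the paper's proof essentially step for step: the same empirical-Bernstein good event for $\hat{\mu}_{k,t}$ and $\hat{\sigma}_{k,t}$, the same analysis at the last round a well-sampled arm is selected, the same two applications of $T_k+\delta\tau_k\le(1+\delta\eta)T_k$ (once to convert effective counts back to observed counts, once to bound the ratio of the confidence width computed with $T_k$ to the one computed with $T_k+\delta\tau_k$) compounding into $(1+\delta\eta)^2$, and the same budget accounting producing the additive $(1-\delta)\tau$ and the final substitution of $a=(n-(1-\delta)\tau)/H_{\text{AP-EVT}}$ against the concentration level. The one directional imprecision is that the budget identity is used as a pigeonhole to exhibit a single over-sampled reference arm whose argmin comparison then forces every other arm's width below $\Delta_i/2$, rather than each arm's own last-pull argmin condition directly lower-bounding its $N_k(n)$ — but that is exactly the EVT mechanism you defer to, so the plan goes through.
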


Note that the parameter $\eta$ here can be considerded as the (maximal) ratio between the
number of unobserved rewards and the number of observed ones.

If all rewards are observed immediately, i.e., $\tau=0$, and thus $\eta =0$, Theorem \ref{thm:AP-EVT} guarantees that the round complexity in the algorithm AP-EVT is consistent to the algorithm EVT, and confirms that the former is a generalization of the latter. {\rcc Note that $\tau$ is usually proportional to the total number of agents in the multi-agent scenario.}

If we set $\delta=0$ in Algorithm \ref{alg:AP-EVT}, and at the same time $\tau$
is dominated by the term $H_{\text{AP-EVT}} (\log (nK)  + \log (1/\epsilon))$,
the round complexity in \eqref{eq:APEVT_complexity} suggests the speedup in
parallelization. In general, we can choose an optimal $\delta$ to minimize the
round complexity in \eqref{eq:APEVT_complexity}.


However, in general we should choose $\delta$ carefully to minimize the round
complexity in \eqref{eq:APEVT_complexity}:
\begin{itemize}
\item When $\tau$ is very large compared to $\eta$, then we choose $\delta$ close to $1$;
\item When  $\tau$ is very small, and thus $\eta$ is small
  too, then we choose $\delta$ close to $0$;
\item When both $\tau$ and $\eta$ are large, we are able to find an
  optimal $\delta$ to minimize the total rounds $n$. 
\end{itemize}

\subsection{Parameter Free Asynchronous Parallel Empirical Variance Guided Thresholding (AP-EVT$_{pf}$)}\label{sec:pf}
In this section, we propose a \emph{parameter free} version of AP-EVT algorithm, which
is anytime and does not require pre-knowledge of total number of rounds or the problem difficulty constant $H$. The key difference from previous algorithms, especially AP-EVT, is that in Step 4 we
use the following quantity to decide which arm to pull
\begin{align*}
  (\text{AP-EVT$_{pf}$})\quad
  B_k(t)  = \sqrt{T_k(t)+\delta \tau_k(t)}
  \left(\sqrt{\hat{\sigma}^2_{k,T_k(t)}+ \hat{\Delta}_{k,T_k(t)}} -
    \hat{\sigma}_{k,T_k(t)}\right).
\end{align*}
Note that $B_k(t)$ no longer depends on $n$, $K$, $H$, or any other parameters.

If every single reward can be observed immediately after a pull, i.e., $\tau=0$, then we
simply set $\delta=0$, and thus AP-EVT$_{pf}$ can lead to a parameter free version of EVT algorithm EVT$_{pf}$. Specifically, in Step 4 of Algorithm \ref{alg:AP-EVT$_{pf}$}, we have
\begin{align*}
  (\text{EVT$_{pf}$})\quad
  B_k(t)  = \sqrt{T_k(t)}
  \left(\sqrt{\hat{\sigma}^2_{k,T_k(t)}+ \hat{\Delta}_{k,T_k(t)}} -
    \hat{\sigma}_{k,T_k(t)}\right).
\end{align*}

In addition to the ``parameter-free'' feature, the AP-EVT$_{pf}$ algorithm also
enjoys similar advantages of AP-EVT over ATP, since the problem complexity constants
$H_{\text{AP-EVT}_{pf}}$ and $H_{\text{AP-EVT}}$ are similar.
This algorithm generates the following result.
\begin{theorem} \label{thm:AP-EVT-pf} 
Let $K>0, n> 2K$, and consider a thresholding bandit problem. Assume that all
arms of the problem are bounded in $[0,1]$. 
Let $\delta\in [0,1]$ and $\eta\ge 0$. Assume that $\tau_i(t) \le \eta T_i(t)$ for all arm $i\in [K]$ and $t\le n$, and
that $\sum_{i=1}^K \tau_i(t)\le \tau$ for all $t\le n$. Given $\epsilon\in (0,1)$, if
\begin{equation}
  \label{eq:roundComplexity-AP-EVTpf}
  n \geq \Theta[H_{\text{AP-EVT}_{pf}} (\log (nK)  + \log (1/\epsilon))  + (1-\delta)\tau] ,
\end{equation}
where
\[
  H_{\text{AP-EVT}_{pf}} : = (1+\delta\eta)\sum_{i=1}^K (\sigma_i^2\Delta_i^{-2} + \Delta_i^{-1}),
\]
and $\Theta$ is some positive constant, then the Algorithm AP-EVT$_{pf}$ guarantees that with probability at least $1-\epsilon$, the
player can correctly discriminate arms in $U_b$ from those in $U_b^C$.   
\end{theorem}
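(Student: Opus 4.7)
The plan is to mirror the proof of Theorem~\ref{thm:AP-EVT} but to analyze the data-adaptive statistic $B_k(t)$ directly, which avoids committing to a pre-chosen value of $a$ tuned to $H$. First I would define a good event $\mathcal{E}$ on which an empirical Bernstein-type concentration holds simultaneously for every arm $k\in[K]$ and every time $t\le n$, namely both $|\hat\mu_{k,t}-\mu_k|$ and $|\hat\sigma_{k,t}^2-\sigma_k^2|$ are bounded by $O(\sqrt{\sigma_k^2\log(nK/\epsilon)/t}+\log(nK/\epsilon)/t)$. A union bound over $k$ and $t$ yields $\mathbb{P}(\mathcal{E}^c)\le\epsilon$, which is the only source of failure probability; the rest of the argument proceeds deterministically on $\mathcal{E}$.

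Next I would use the identity $\sqrt{\hat\sigma^2+\hat\Delta}-\hat\sigma = \hat\Delta/(\sqrt{\hat\sigma^2+\hat\Delta}+\hat\sigma)$ to rewrite
\[
B_k(t) \;=\; \sqrt{T_k(t)+\delta\tau_k(t)}\cdot\frac{\hat\Delta_{k,T_k(t)}}{\sqrt{\hat\sigma_{k,T_k(t)}^2+\hat\Delta_{k,T_k(t)}}+\hat\sigma_{k,T_k(t)}},
\]
and then, on $\mathcal{E}$, show it is, up to universal constants, a faithful proxy for the population version $B_k^{*}(t):=\sqrt{T_k(t)+\delta\tau_k(t)}\,\Delta_k/(\sqrt{\sigma_k^2+\Delta_k}+\sigma_k)$. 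Using $(\sqrt{\sigma_k^2+\Delta_k}+\sigma_k)^2\le 4(\sigma_k^2+\Delta_k)$ delivers the key algebraic link
\[
(B_k^{*}(t))^{2}\;\ge\;\frac{T_k(t)+\delta\tau_k(t)}{4(\sigma_k^2\Delta_k^{-2}+\Delta_k^{-1})},
\]
which is the mechanism by which $B_k$ translates into the hardness measure entering $H_{\text{AP-EVT}_{pf}}$.

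Finally I would argue by contradiction: assume the algorithm misclassifies some arm $k^{*}$ at round $n$. On $\mathcal{E}$, this forces $T_{k^{*}}(n)=O((\sigma_{k^{*}}^2\Delta_{k^{*}}^{-2}+\Delta_{k^{*}}^{-1})\log(nK/\epsilon))$ and hence $B_{k^{*}}(n)$ is small. For each other arm $j$, the selection rule applied at the last round on which $j$ was pulled, together with the assumption $\tau_j(t)\le\eta T_j(t)$ and the algebraic link above, yields
\[
T_j(n)+\delta\tau_j(n)\;\lesssim\;(1+\delta\eta)(\sigma_j^2\Delta_j^{-2}+\Delta_j^{-1})\log(nK/\epsilon).
\]
Summing over $j$ and invoking the identity $\sum_{k}(T_k(n)+\delta\tau_k(n))=n-(1-\delta)\sum_k\tau_k(n)\ge n-(1-\delta)\tau$ gives $n-(1-\delta)\tau\lesssim (1+\delta\eta)\,H_{\text{AP-EVT}_{pf}}\log(nK/\epsilon)$, contradicting~\eqref{eq:roundComplexity-AP-EVTpf}. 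The improvement from $(1+\delta\eta)^{2}$ in Theorem~\ref{thm:AP-EVT} to $(1+\delta\eta)$ here arises because only the pessimistic bound $T_k+\delta\tau_k\le(1+\delta\eta)T_k$ is used, whereas the $a$-tuned AP-EVT pays a second $(1+\delta\eta)$ through its scheduled choice $a=(n-(1-\delta)\tau)/H_{\text{AP-EVT}}$.

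The main obstacle I anticipate is the transfer from the empirical comparison $B_{k^{*}}(t)\le B_j(t)$ to the desired population-level inequality on $T_j$, because $B_k$ depends simultaneously on $\hat\Delta$ and $\hat\sigma$ and is not strictly monotone in $T_k$ under their fluctuations. Handling these empirical quantities together with the $\delta\tau_k$ correction via a careful ``last pull'' argument --- freeze $t$ at the round on which $j$ was last selected, bound the subsequent drift in $B_j$ by a constant factor using the concentration on $\mathcal{E}$, and only then compare with $B_{k^{*}}$ --- is where the bulk of the book-keeping will live.
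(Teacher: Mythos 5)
Your proposal is correct and follows essentially the same route as the paper: the same empirical-Bernstein good event for both $\hat\mu$ and $\hat\sigma$, the same algebraic identity $\sqrt{\hat\sigma^2+\hat\Delta}-\hat\sigma=\hat\Delta/(\sqrt{\hat\sigma^2+\hat\Delta}+\hat\sigma)$ (the paper's $f^{-1}$) linking $B_k$ to $\sigma_k^2\Delta_k^{-2}+\Delta_k^{-1}$, the same last-pull comparison with the $(1+\delta\eta)$ factor from $\tau_i\le\eta T_i$, and the same budget summation $\sum_k(T_k+\delta\tau_k)\ge n-(1-\delta)\tau$. The only difference is organizational: you argue by contraposition (a misclassified arm forces the total budget to be too small), whereas the paper runs the argument forward by first exhibiting, via the same summation, one arm whose allocation meets its fair share and then propagating the selection rule to all arms.
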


Since EVT$_{pf}$ can be considered as a special case of AP-EVT$_{pf}$ by setting
$\tau=\delta=\eta=0$, we obtain the theoretical guarantee on the round complexity with
   $H_{\text{EVT}_{pf}} : = \sum_{i=1}^K (\sigma_i^2\Delta_i^{-2} +
   \Delta_i^{-1})$,
which is the same as the EVT algorithm.

\begin{algorithm}[H] 
\begin{algorithmic}[1]
\REQUIRE $b$, $n$, and {\color{blue} $\delta$}
\ENSURE $\hat{U}_b(n)=\{k\in [K]~|~\hat{\mu}_{k, T_k(n)} \geq b\}$
\STATE Pull all $K$ arms twice and observed rewards $\{X_{k,1}\}_{k=1}^K$ and $\{X_{k,2}\}_{k=1}^K$
\FOR{$t=2K:n$}
\STATE Calculate the number of observed rewards $T_k(t)$ and \textcolor{blue}{the number of assigned but unobserved rewards $\tau_k(t)$} at the current time point $t$ for all arms $k=1,2,\cdots, K$ 
\STATE Select arm 
\[
     I_{t+1} =\argmin_k B_k(t)
\]
\STATE Send out the order to pull arm $I_{t+1}$ (\textcolor{blue}{but may not observe the reward immediately}) 
\ENDFOR
\end{algorithmic}
 \caption{AP-EVT$_{pf}$} \label{alg:AP-EVT$_{pf}$}
\end{algorithm}

It is worth noting that in the next section, our experiments show that AP-EVT
(or EVT) slightly outperforms AP-EVT$_{pf}$ (or EVT$_{pf}$), due to the
accessibility to additional information such as the problem complexity $H$.

\section{Experiments}

We conduct empirical study to validate the proposed algorithms \APTEV~and
\APTEVf. Section~\ref{sec:exp:1} compares \TEV~(the non-parallel version of
\APTEV) and \TEVf~(the non-parallel version of \APTEVf) to the ATP algorithm
\citep{locatelli2016optimal}. Section~\ref{sec:exp:2} validates the speedup
property of \APTEV~and \APTEVf~or equivalently the tolerance to the number of
unobserved rewards.

\subsection{Comparison among ATP, \TEV, AND \TEVf} \label{sec:exp:1}

All experiments are conducted on synthetic data. Let the total number of arms $K$ be $100$. Arm $k$ follows the uniform distribution $U\left(\mu_k -r_k,\mu_k+r_k\right)$ where $\mu_k$ is generated from the uniform distribution $U\left(0.6, 0.8\right)$. The threshold $b$ is chosen to be 0.7. The parameter $a$ in \TEV~ and \APTEV~ is chosen to be $n / K$ where $n$ is the total number of pulls. 

We compare three approaches ATP, \TEV, and \TEVf~with three settings for $r_k$
which indicates the magnitude of variance to be small: $r_k\sim U\left( 0.15,
  0.25\right)$, median: $r_k\sim U\left( 0.25, 0.35\right)$, and large:$r_k\sim U\left( 0.35, 0.45\right)$.

All random experiments are repeated for 100 times. Figure~\ref{fig:APT_TEV_TEVpf} compares the identification accuracy for all algorithms, that is, the percentage of successful identification of all arms. Note that in \TEV~\emph{for each fixed value of $n$}, the parameter $a$ is chosen by $n/K$ and all experiments are repeated 100 times. We can observe that
\begin{itemize}
\item The proposed two approaches \TEV~and \TEVf~outperform ATP overall, due to estimating the variance on fly;
\item TEV overall outperforms TEV$_{pf}$;
\item The advantage of \TEVf~over APT is more obvious when the variance is small, which is consistent with our theoretical analysis. 
\end{itemize}

\begin{figure*}[htp!]
\centering
\includegraphics[width=0.32\textwidth]{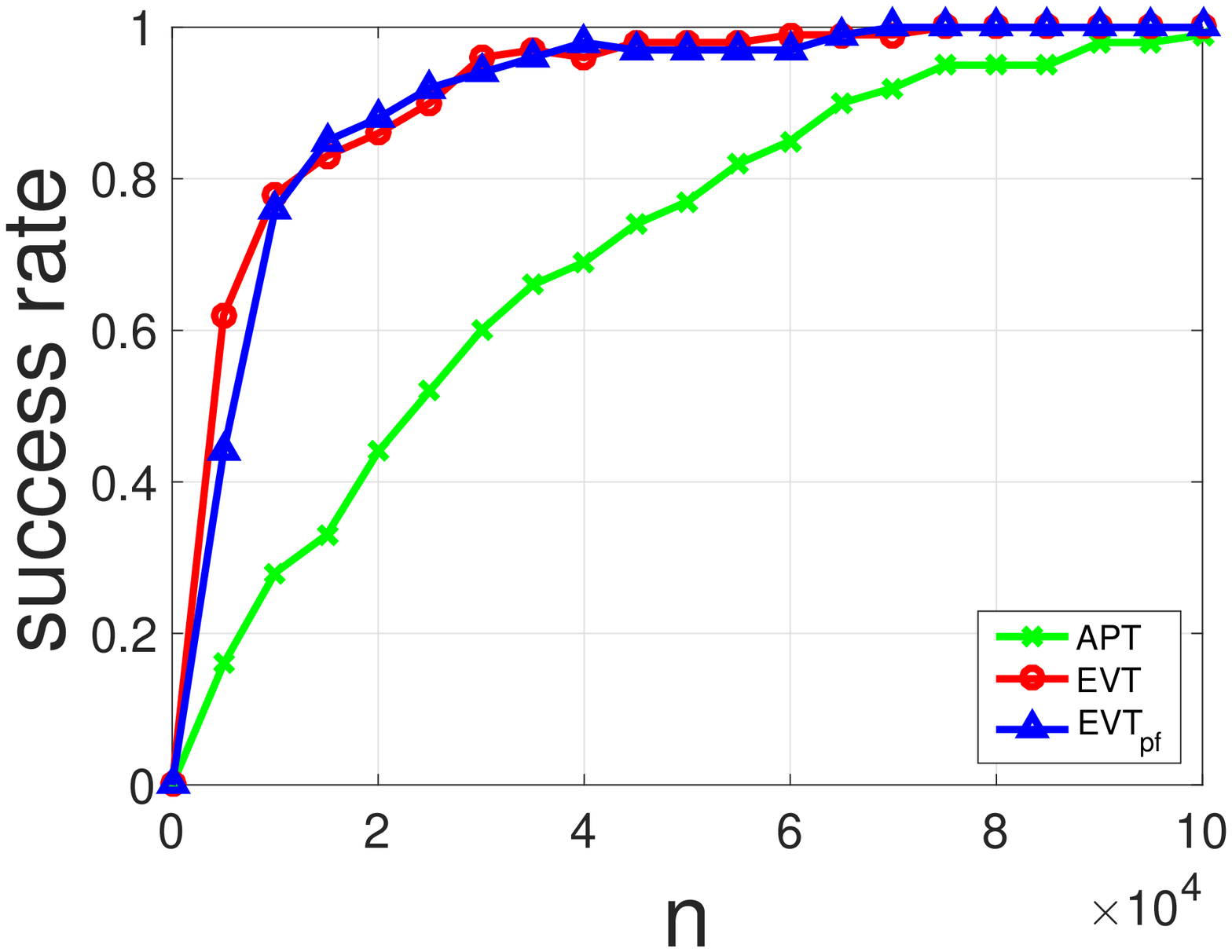} \hspace{-2mm}
\includegraphics[width=0.32\textwidth]{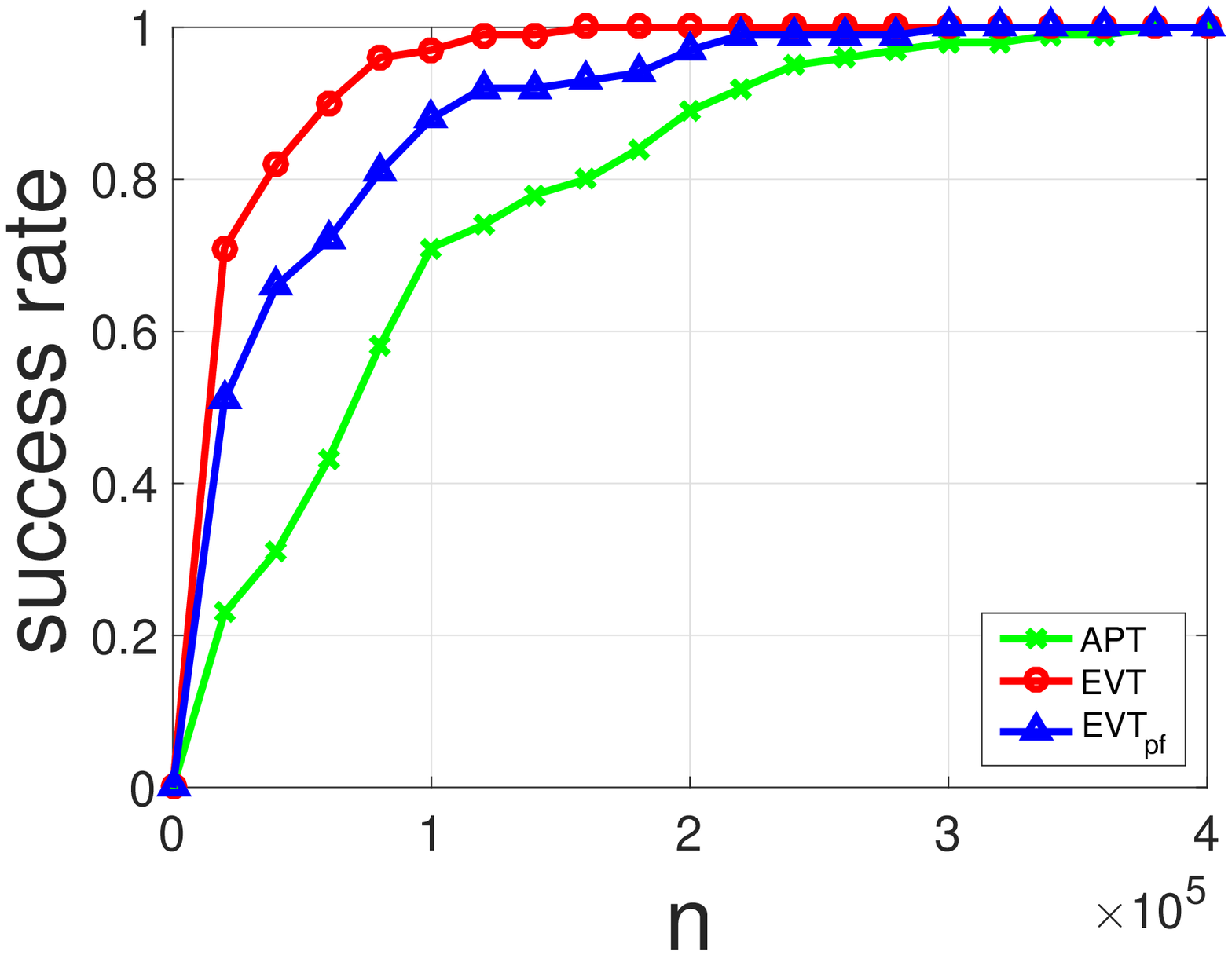} \hspace{-2mm}
\includegraphics[width=0.32\textwidth]{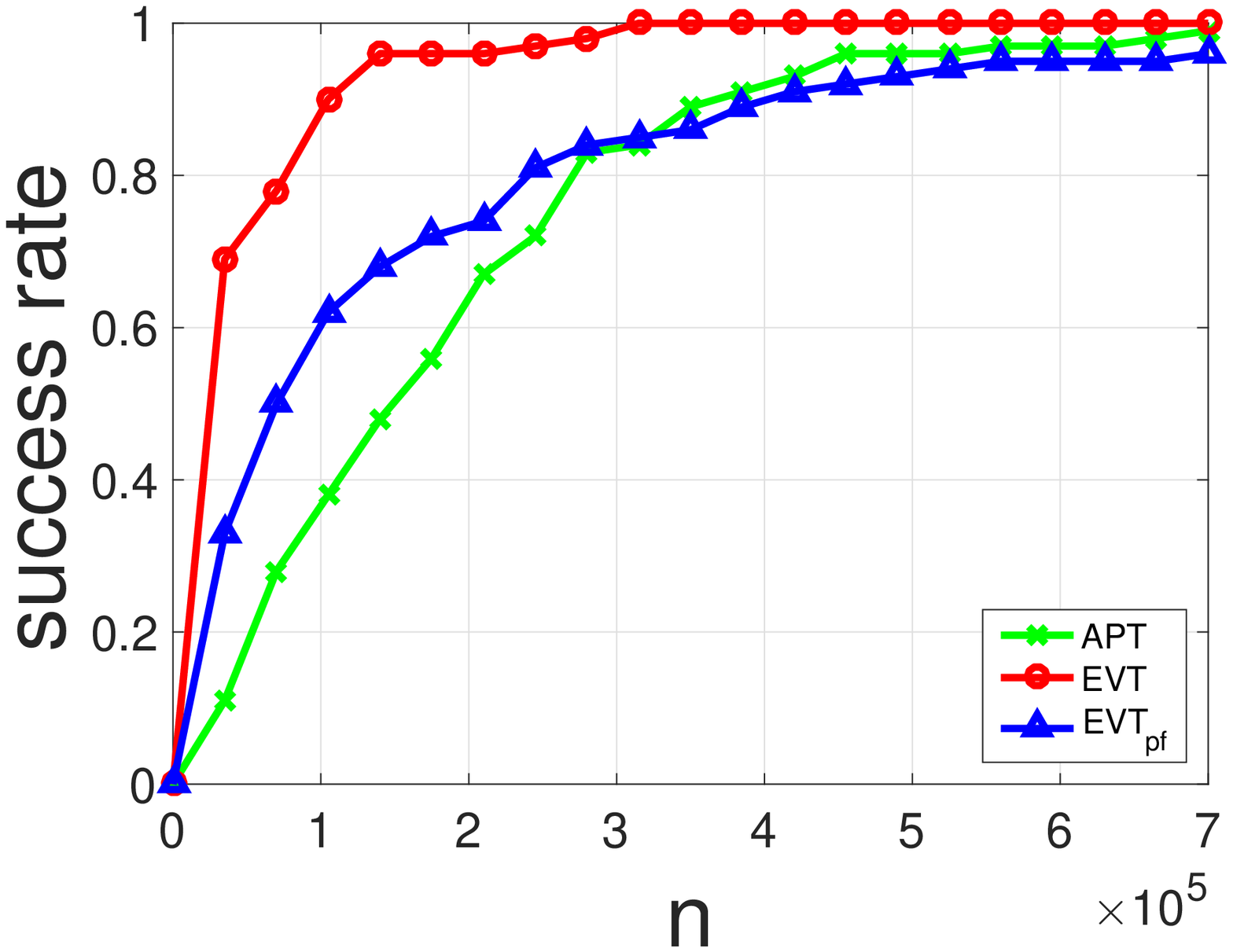}
\caption{Comparison of success rate among APT vs \TEV~vs \TEVf. The variances of three graphs are chosen to be small, median, and large.}
\label{fig:APT_TEV_TEVpf}
\end{figure*} 

\begin{figure*}[htp!]
\centering
{\includegraphics[width=0.32\textwidth]{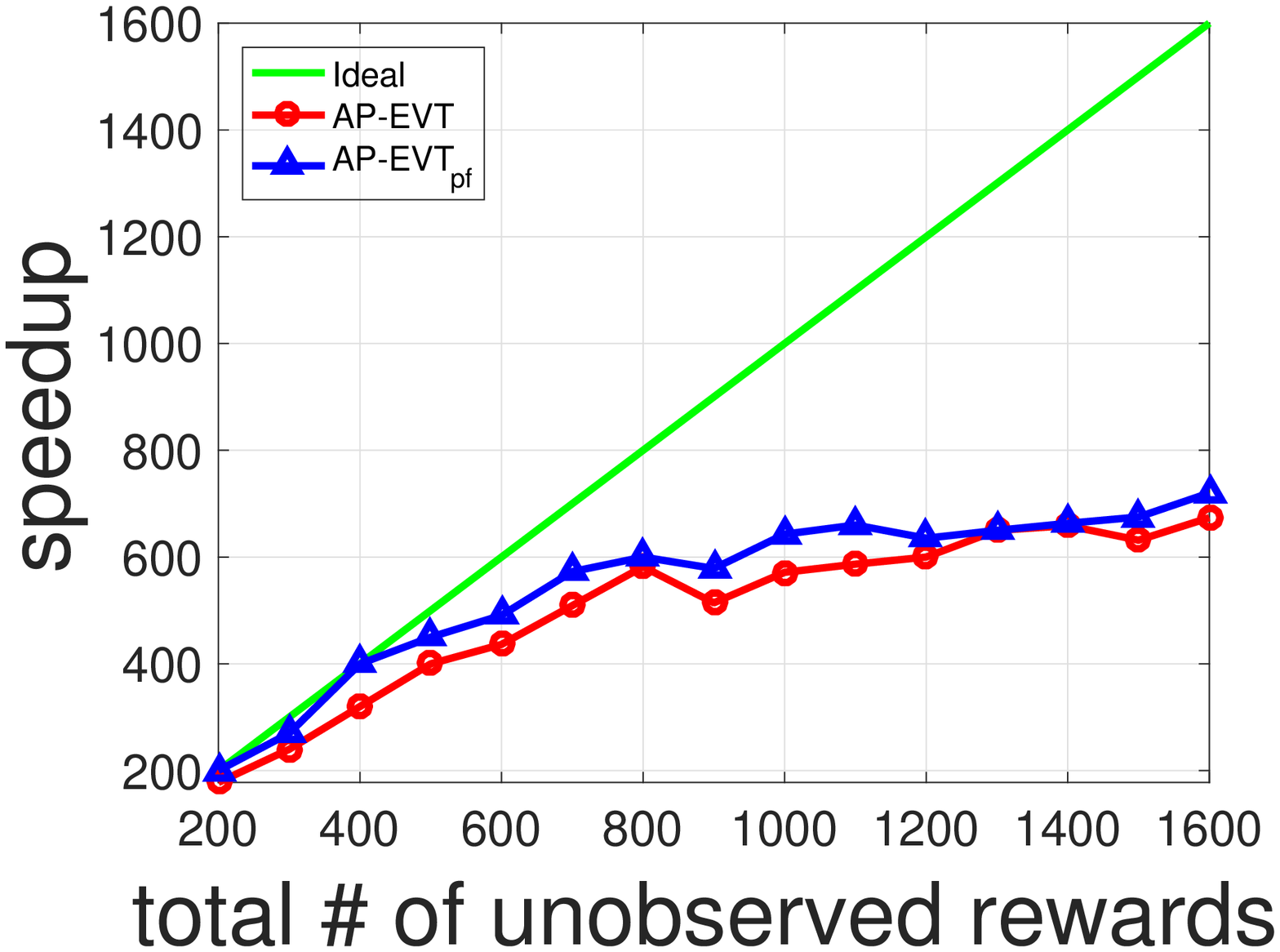}} \hspace{-2mm}
{\includegraphics[width=0.32\textwidth]{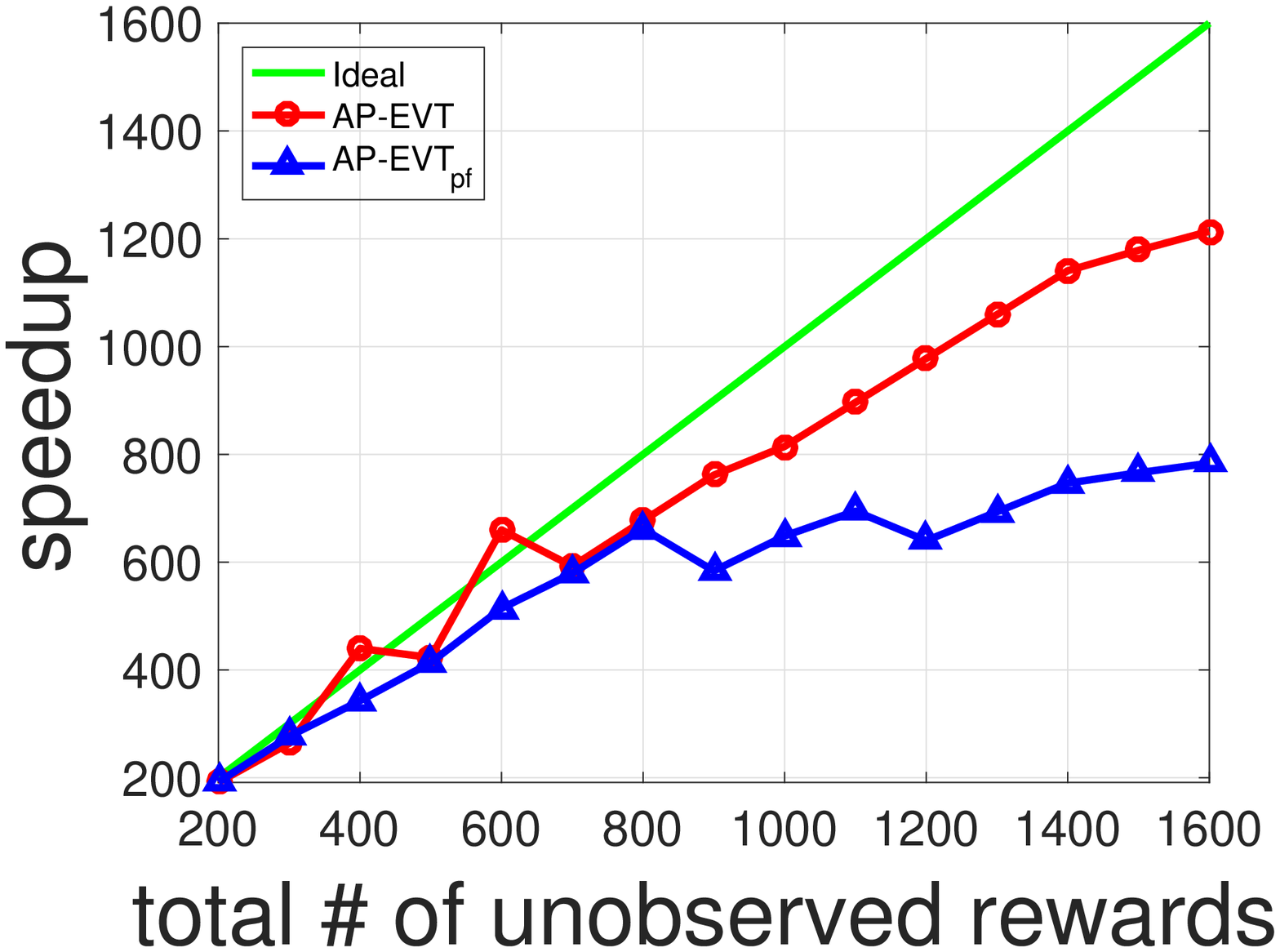}} \hspace{-2mm}
{\includegraphics[width=0.32\textwidth]{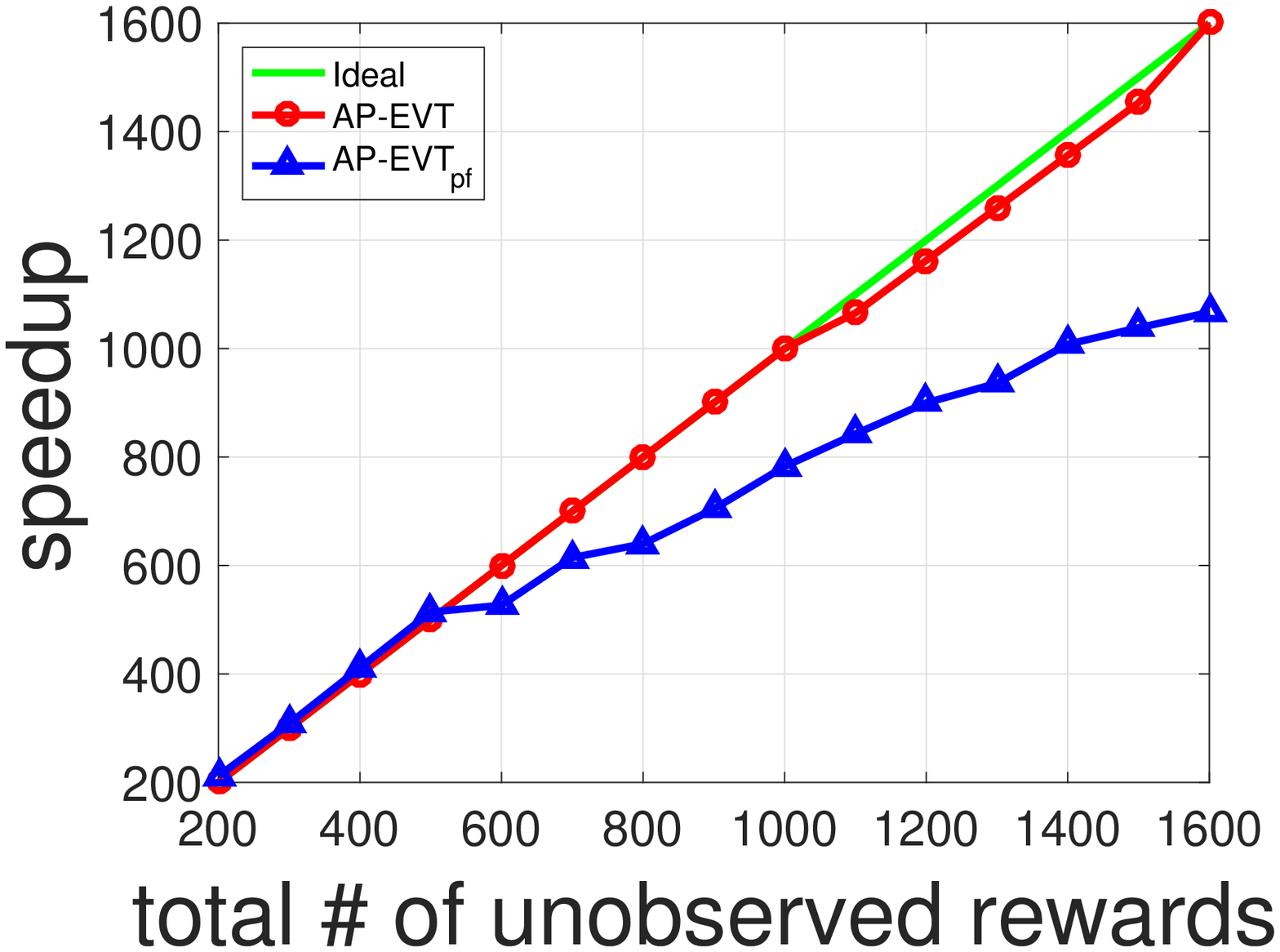}}
\caption{Speedup of \APTEV~and \APTEVf~with small, median and large variances.}
\label{fig:AP_TEV}
\end{figure*} 


\subsection{Speedup of  \APTEV~and \APTEVf} \label{sec:exp:2}

This section validates the speedup of \APTEV~and \APTEVf~or equivalently the tolerance to the number of unobserved rewards. We choose $n=20,000$ and $K=100$. Arm $k$ follows the uniform distribution $U\left(\mu_k -r_k,\mu_k+r_k\right)$, where $\mu_k \sim U\left(0.3, 0.7\right)$. The threshold $b$ is chosen to be 0.5. All random experiments are repeated for 100 times.

We choose a large range of $\tau$ and report the speedup with respect to different number of unobserved rewards $\tau$ in Firgure~\ref{fig:AP_TEV}. As we mentioned before, the value of $\tau$ is usually proportional to the total number of agents. So the speedup with respect to $\tau$ is defined as
\[
  \frac{
      \text{\# of pulls with \emph{full observations}}
      \text{to achieve accuracy 95\%}
  }
  {
      \text{\# of pulls with \emph{maximal $\tau$ unobserved rewards}}
      \text{to achieve accuracy 95\%}
  } \times \tau,
\] 
which measures how many times faster using simultaneously $\tau$ agents than using a single agent.

We compare the speedup of \APTEV~and \APTEVf ~with three settings for $r_k$
which indicate the magnitude of variance from small: $r_k\sim U\left( 0.05,
  0.15\right)$, median: $r_k\sim U\left( 0.1, 0.2\right)$, and large: $r_k\sim U\left( 0.15, 0.25\right)$.

Figure~\ref{fig:AP_TEV} shows the speedup curves for both \APTEV~and \APTEVf. The parameter $\delta$ in \APTEVf~ is simply chosen to be $0$. Key observations include
\begin{itemize}
\item Both algorithms achieve a nice speedup property;
\item \APTEV~overall overperforms \APTEVf;
\item The speedup is better when the variance is large, which is consistent with
  our theory, e.g., in \eqref{eq:APEVT_complexity} if the variance is large, the first term $H_{\text{AP-EVT}} (\log (nK)  + \log (1/\epsilon))$ will dominant the second term $(1-\delta)\tau$ about the maximal number of unobserved rewards.
\end{itemize}

\section{Conclusion and Future Work}

This paper proposes two empirical variance guided parallel algorithms -- AP-EVT
and AP-EVT$_{pf}$ -- for the thresholding bandit problem. Both algorithms
improve the ``optimal'' algorithm \citep{locatelli2016optimal}. First both
algorithms can significantly reduce the \emph{round complexity} if all arms'
variances are small. Second both algorithms support the (asynchronous or
synchronous) parallelization implementation by allowing deciding next pull with
unobserved rewards, which can significantly improve the efficiency in practice. 

Both AP-EVT and AP-EVT$_{pf}$ have similar theoretical guarantees including the
round complexity and the tolerance to the number of unobserved rewards. AP-EVT works
slightly better in practice but needs to know the total number of pulls and the
problem complexity $H$ beforehand and only guarantees the performance after
finishes all pulls, while  AP-EVT$_{pf}$ is totally parameter-free and have the theoretical guarantees along the whole pull path. 

The future work includes studying the lower bound of the round complexity if all arms have bounded high order moments and extending the asynchronous parallelism to other bandit algorithms such as UCB (the leading algorithm for best arm identification).

\section{Acknowledgment}
We thank Dr. Yifei Ma for his constructive comments and helpful advices.

\newpage

\begin{center}
{\Large \bf Supplementary Material}
\end{center}
\setcounter{section}{0}
In this supplement, we will first prove our results: Theorem \ref{thm:EVT},
Theorem \ref{thm:AP-EVT} and Theorem \ref{thm:AP-EVT-pf}. The general cases are
summarized in Proposition \ref{prop:general} and Proposition
\ref{prop:general-2}. Then we prove Theorem \ref{thm:lowerbound} on the lower
bound of probability of making a mistake.

\paragraph{Parameter-dependent case: Theorem \ref{thm:EVT} and Theorem \ref{thm:AP-EVT}}
We first prove the case where the algorithm involves the complexity parameter
$H$, that is,
\[
  I_{t+1} = \argmin_{i\in [K]} B_i(t),
\]
where
\[
  B_i(t) = \hat{\Delta}_i(t) \left(\sqrt{\frac{2\hat{\sigma}_{i,T_i(t)}^2a}{T_i(t) + \delta \tau_i(t)}} + \frac{3a}{T_i(t) +\delta\tau_i(t)}\right)^{-1},
\]
and
\[
  a \le \frac{n-K-(1-\delta)\tau}{H}.
\]

The idea of proving our main results is based on the so-called {\it optimism in
  face of uncertainty} \citep{bubeck2012regret}.

First, we find an event, which is
``consistent'' with the data, in the sense that the more data is sampled, the
bigger probability of the event is. This step is often through certain
concentration inequalities. 

\begin{lemma}
  Assume $X_{i,s}$ takes values in $[0,1]$. For $a>0$, define
  \begin{equation}
  \label{eq:A1}
  A_1 = \left\{\forall i\in [K], \forall t\le n : |\hat{\mu}_{i,t} - \mu_i| \le \sqrt{\frac{2\hat{\sigma}_{i,t}^2 a}{t}} + \frac{(3-\sqrt{2})a}{t}\right\},
\end{equation}
and
\begin{equation}
  \label{eq:A2}
  A_2 = \left\{\forall i\in [K], \forall t\le n : |\hat{\sigma}_{i,t} - \sigma_i| \le \sqrt{\frac{a}{4t}}\right\}.
\end{equation}
Let $A: = A_1\cap A_2$, then
\begin{equation}
  \label{eq:concentration}
  \bP(A)  \ge 1 - 5 nK \exp(-a/8).
\end{equation}
\end{lemma}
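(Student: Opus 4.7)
The plan is to recognize $A_1$ as the standard empirical Bernstein concentration event for the mean and $A_2$ as a concentration event for the empirical standard deviation, and then combine them with a union bound over the $K$ arms and $n$ time steps. Since each reward $X_{i,s}\in[0,1]$ is independent across rounds for a fixed arm, both events reduce to classical i.i.d.\ concentration statements applied per arm and per time index.

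For $A_1$, I would invoke the empirical Bernstein inequality of \citet{maurer2009empirical}, which for i.i.d.\ $[0,1]$-valued samples gives a one-sided tail of the form $|\hat\mu_{i,t}-\mu_i|\le\sqrt{2\hat\sigma_{i,t}^2 a/t}+c\, a/t$ with failure probability of order $\exp(-a/8)$ once we match the log term $\ln(1/\delta)$ with $a/8$. Applying the bound to $X$ and to $1-X$ yields the two-sided version, and the constant $3-\sqrt{2}$ appearing in \eqref{eq:A1} comes from tracking the lower-order remainder carefully (the $\sqrt{2}$ piece is absorbed into the $\sqrt{2\hat\sigma^2 a/t}$ term, leaving a $(3-\sqrt{2})a/t$ remainder). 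Summing the two tails and union-bounding over $i\in[K]$ and $t\le n$ produces a contribution of at most $(\text{const})\cdot nK\exp(-a/8)$.

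For $A_2$, I would use a concentration result for the empirical standard deviation of $[0,1]$-valued random variables. Either (i) apply McDiarmid's inequality directly to the functional $(X_1,\dots,X_t)\mapsto\hat\sigma_{i,t}$, whose bounded-differences constant is $O(1/\sqrt{t})$ because $|X|\le1$, yielding $|\hat\sigma_{i,t}-\E\hat\sigma_{i,t}|\le\sqrt{a/(8t)}$ with the desired exponential failure probability, and then relate $\E\hat\sigma_{i,t}$ to $\sigma_i$ using Jensen/Cauchy--Schwarz; or (ii) apply a Bernstein-type bound to the variance $\hat\sigma_{i,t}^2$ and then use $|\sqrt{u}-\sqrt{v}|\le\sqrt{|u-v|}$ to convert to a bound on $|\hat\sigma_{i,t}-\sigma_i|$. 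Either route, once properly tuned, yields the $\sqrt{a/(4t)}$ deviation, again with two tails, and union-bounding over $i,t$ contributes another $O(nK\exp(-a/8))$ to the failure probability.

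Finally, adding the contributions from the upper and lower tails of $A_1$ and $A_2$ (and any additional auxiliary event introduced during the variance-to-standard-deviation conversion, e.g.\ a Chernoff bound controlling $\hat\sigma_{i,t}^2$ away from $0$) gives the total bound $\bP(A^c)\le 5nK\exp(-a/8)$, which is exactly \eqref{eq:concentration}. The main obstacle is tracking the constants cleanly, in particular verifying that the $(3-\sqrt{2})$ coefficient in $A_1$ and the $1/4$ in $A_2$ match the exponent $1/8$ in \eqref{eq:concentration}; everything else is a routine combination of Maurer--Pontil and a standard deviation-concentration bound followed by a union bound over $nK$ pairs $(i,t)$.
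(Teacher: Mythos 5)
Your overall architecture is the same as the paper's: concentrate the empirical mean and the empirical standard deviation separately for each pair $(i,t)$, take a union bound over the $nK$ pairs, and intersect the two events via $\bP(A_1\cap A_2)\ge \bP(A_1)+\bP(A_2)-1$. For $A_1$ this works: the paper applies Theorem~1 of \citet{audibert2009exploration} with exponent $a/2$ to get $|\hat{\mu}_{i,t}-\mu_i|\le \sqrt{\hat{\sigma}_{i,t}^2a/t}+3a/(2t)$ with failure probability $3nK\exp(-a/2)$, and then simply observes that this event is contained in $A_1$ because $1\le\sqrt{2}$ and $3/2\le 3-\sqrt{2}$; your accounting of the $(3-\sqrt 2)$ constant is looser than that but points in the right direction (though note that if you literally use the Maurer--Pontil empirical Bernstein instead, its second-order constant $7/(3(t-1))$ and its $t-1$ normalization of the variance do not obviously fit inside $(3-\sqrt 2)a/t$ without extra work).

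The genuine gap is in your treatment of $A_2$: neither of your two proposed routes delivers a deviation of order $\sqrt{a/t}$ for $|\hat{\sigma}_{i,t}-\sigma_i|$ at confidence $\exp(-a/8)$. For route (i), the bounded-differences constant of $\hat{\sigma}_{i,t}$ is indeed $c_j=1/\sqrt{t}$ per coordinate, but McDiarmid's exponent involves $\sum_{j=1}^t c_j^2=1$, so it yields $\bP(|\hat{\sigma}_{i,t}-\E\hat{\sigma}_{i,t}|\ge\epsilon)\le 2\exp(-2\epsilon^2)$ --- a deviation of order $\sqrt{a}$ that does \emph{not} shrink with $t$; plugging in $\epsilon=\sqrt{a/(4t)}$ only gives failure probability $\exp(-a/(2t))$, which is far too weak for large $t$. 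For route (ii), a Bernstein bound on $\hat{\sigma}_{i,t}^2-\sigma_i^2$ has leading term $\sqrt{2\sigma_i^2a/t}$, and passing through $|\sqrt{u}-\sqrt{v}|\le\sqrt{|u-v|}$ produces a term of order $(\sigma_i^2a/t)^{1/4}$, which exceeds $\sqrt{a/(4t)}$ whenever $\sigma_i^2\gtrsim a/t$; dividing by $\hat{\sigma}+\sigma$ instead introduces a $1/\sigma_i$ blow-up. The $1/\sqrt{t}$ rate for the standard deviation itself requires the self-bounding property of the empirical variance, i.e., exactly Theorem~10 of \citet{maurer2009empirical} (proved by the entropy method), which is what the paper invokes to get $\bP(A_2)\ge 1-2nK\exp(-a/8)$. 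You should cite that result directly rather than attempt to rederive it from McDiarmid or from variance concentration.
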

\begin{proof}
  Define
  \[
    B = \left\{\forall i\in [K], \forall t\le n : |\hat{\mu}_{i,t} - \mu_i| \le \sqrt{\frac{\hat{\sigma}_{i,t}^2 a}{t}} + \frac{3a}{2t}\right\}.
  \]
  Applying Theorem $1$ in \citet{audibert2009exploration} yields
  \[
    \bP(B) \ge 1 - 3nK \exp(-a/2).
  \]
  It is obvious that $B \subseteq A_1$, so
  \begin{equation}
    \label{eq:pA1}
    \bP(A_1) \ge 1 - 3nK \exp(-a/2).
  \end{equation}
  Applying Theorem $10$ in \citet{maurer2009empirical} yields
  \begin{equation}
    \label{eq:pA2}
    \bP(A_2) \ge 1 - 2 nK \exp(-a/8).
  \end{equation}
  It follows from \eqref{eq:pA1} and \eqref{eq:pA2} that
  \begin{align*}
    \bP(A_1\cap A_2)
    & = \bP(A_1) + \bP(A_2) - \bP(A_1\cup A_2)\\
    & \ge \bP(A_1) + \bP(A_2) - 1\\
    & \ge 1 - 3 nK \exp(-a/2) - 2 nK\exp(-a/8)\\
    & \ge 1 - 5 nK \exp(-a/8),
  \end{align*}
  which completes the proof.
\end{proof}

Next, the most ``favorable'' environment should be identified on the event
found previously. In our case, we should be able to correctly discriminate arms
above the threshold from those below the threshold on the event $A$.

There is a simple method to do the correct separation: for each arm $i\in [K]$,
we require at the final round $n$,
\[
  \begin{cases}
    \hat{\mu}_{i,T_i(n)} \ge \mu_i - \frac{\mu_i-b}{2},\quad\mbox{if}~\mu_i\ge b;\\
    \hat{\mu}_{i,T_i(n)} \le \mu_i + \frac{b-\mu_i}{2},\quad\mbox{if}~\mu_i< b.
  \end{cases}
\]
In other words,
\[
  |\hat{\mu}_{i,T_i(n)}-\mu_i| \le \frac{\Delta_i}{2},
\]
where $\Delta_i = |\mu_i-b|$.

Since we are on event $A$, it is enough to show that
\begin{equation}
  \label{eq:main-inq}
  \sqrt{\frac{2\hat{\sigma}_{i,T_i(n)}^2 a}{T_i(n)}} + \frac{(3-\sqrt{2})a}{T_i(n)} \le \frac{\Delta_i}{2},
\end{equation}
which is a sufficient condition to correctly discriminate arms.

Define $\cL(n)$ to be the event that at least one arm is incorrectly discriminated, that is, 
\begin{equation}
  \label{eq:L}
  \cL(n) = \mathbbm{1}(\{U_b\cap \hat{U}_b^C(n) \neq \varnothing\}\cup \{U^C_b \cap
  \hat{U}_b(n) \neq \varnothing\}), 
\end{equation}
where $\hat{U}_b(n)$ is the output of an algorithm after pulling $n$ rounds of
arms, and $\hat{U}_b^C(n)$ is its complement.
Then the expected value of
$\cL(n)$ is exactly the probability of making a
mistake:
\[
  \E(\cL(n)) = \bP(\{U_b\cap \hat{U}_b^C(n) \neq \varnothing\}\cup \{U^C_b \cap
  \hat{U}_b(n) \neq \varnothing\}).
\]

As long as \eqref{eq:main-inq} holds on event $A$, we obtain following result:
\begin{proposition}\label{prop:general}
 Let $K, n>0$, $\eta\ge 0, \delta\in [0,1]$ and $b\in \R$. Assume that the distribution for the
 outcome of each arm is bounded in
 $[0,1]$. Assume that $\tau_i(t) \le \eta T_i(t)$ for all arm $i\in [K]$ and
 $t\le n$, and that $\sum_{i=1}^K \tau_i(t)\le \tau$ for all $t\le n$.

 Algorithm $1$'s expected loss in the asynchronous thresholding multi-bandit
 problem is upper bounded by
 \begin{equation}
   \label{eq:main}
   \E (\cL(n)) \le 5nK \exp(-a/8),
 \end{equation}
 where
 \[
   a \le \frac{n-K-(1-\delta)\tau}{H},
 \]
\end{proposition}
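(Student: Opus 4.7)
The plan is to decompose $\E\cL(n) \le \bP(A^c) + \bP(\cL(n)=1,A)$ and handle each piece separately. The first summand is already bounded by $5nK\exp(-a/8)$ via \eqref{eq:concentration}, matching the right-hand side of \eqref{eq:main}. All remaining work is therefore to show that $\cL(n)=0$ deterministically on $A$, which by the discussion preceding the Proposition reduces to verifying \eqref{eq:main-inq} for every arm $i$.

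I would establish \eqref{eq:main-inq} by contradiction, exploiting the greedy rule $I_{t+1}=\argmin_j B_j(t)$. Suppose some arm $i^\star$ violates \eqref{eq:main-inq}. Let $\bar t$ be the last round at which $i^\star$ is pulled; then $B_{i^\star}(\bar t)\le B_j(\bar t)$ for every $j\in[K]$. Writing $N_j(t):=T_j(t)+\delta\tau_j(t)$ for the weighted pull count that appears inside $B_j$, I would use $A_1$ to convert the failure of \eqref{eq:main-inq} into the lower bound $\hat\Delta_{i^\star}(\bar t)\ge\Delta_{i^\star}/2$, and use $A_2$ to replace each $\hat\sigma_j$ by $\sigma_j$ up to an additive $\sqrt{a/(4N_j(\bar t))}$ term. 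Substituting into $B_{i^\star}(\bar t)\le B_j(\bar t)$ and solving for $N_j(\bar t)$ gives, by elementary algebra, a per-arm lower bound of the form
\[
  N_j(\bar t)\ge C\,a\Bigl(\frac{\sigma_j^2}{\Delta_j^2}+\frac{1}{\Delta_j}\Bigr)
\]
for some explicit absolute constant $C$, valid for every $j\in[K]$.

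Summing this over $j$ and combining with the budget identity $\sum_j(T_j(n)+\tau_j(n))\le n$, the hypothesis $\sum_j\tau_j(t)\le\tau$, and the $K$ forced initialization pulls, one obtains
\[
  C\,a\sum_{j=1}^K\Bigl(\frac{\sigma_j^2}{\Delta_j^2}+\frac{1}{\Delta_j}\Bigr)\le \sum_j N_j(n) \le n-(1-\delta)\tau-K.
\]
When Theorem~\ref{thm:AP-EVT} is subsequently invoked, one further uses $\tau_j(t)\le\eta T_j(t)$ to pass between $N_j$ and $T_j$, producing the extra $(1+\delta\eta)^2$ factor in $H_{\text{AP-EVT}}$. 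The displayed inequality then contradicts the hypothesis $a\le (n-K-(1-\delta)\tau)/H$ provided the constant $C$ exceeds $1$, yielding the required contradiction and hence \eqref{eq:main-inq}; combined with the opening paragraph this proves \eqref{eq:main}.

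The main obstacle I foresee is constant bookkeeping: matching the factor $3-\sqrt 2$ inside \eqref{eq:A1} against the $3$ used inside $B_j$, tracking the division by $2$ in \eqref{eq:main-inq}, and later absorbing the $(1+\delta\eta)^2$ factor from the asynchronous pull counts. Conceptually the argument is standard ``optimism in face of uncertainty''; Lemma~\ref{lem:compareH} and the routine inversion of $\sqrt{2\sigma^2 a/N}+3a/N\le\Delta/c$ into a lower bound $N\ge c'\,a(\sigma^2/\Delta^2+1/\Delta)$ are the only slightly nontrivial computational steps.
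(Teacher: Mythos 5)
Your outer skeleton matches the paper's: split $\E(\cL(n))\le \bP(A^c)+\bP(\cL(n)=1,\,A)$, bound $\bP(A^c)$ by the concentration lemma \eqref{eq:concentration}, and reduce correctness on $A$ to the sufficient condition \eqref{eq:main-inq}. The gap is in how you propose to prove \eqref{eq:main-inq}, and it is a genuine one. Your contradiction anchors the greedy comparison at the last pull $\bar t$ of the \emph{under-sampled} arm $i^\star$, but at that moment $B_{i^\star}(\bar t)$ admits no useful lower bound: on $A_1$ you only get $\hat\Delta_{i^\star}(\bar t)\ge \Delta_{i^\star}-(\text{width}_{i^\star}(\bar t))$, and the very failure of \eqref{eq:main-inq} means that width exceeds $\Delta_{i^\star}/2$, so this lower bound is below $\Delta_{i^\star}/2$ and possibly vacuous --- you cannot conclude $\hat\Delta_{i^\star}(\bar t)\ge\Delta_{i^\star}/2$. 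Consequently $B_{i^\star}(\bar t)=\hat\Delta_{i^\star}(\bar t)\cdot(\text{width}_{i^\star}(\bar t))^{-1}$ is not bounded below by any positive constant, and $B_j(\bar t)\ge B_{i^\star}(\bar t)$ yields no lower bound on $N_j(\bar t)$: an under-resolved arm being the argmin says nothing about the other arms. Moreover, even granting the per-arm bounds $N_j(\bar t)\ge C\,a\,h_j$, your closing step is not a contradiction: summing gives $CaH\le n-K-(1-\delta)\tau$, i.e.\ $a\le (n-K-(1-\delta)\tau)/(CH)$, which is a \emph{stronger} upper bound on $a$ and is perfectly consistent with the hypothesis $a\le (n-K-(1-\delta)\tau)/H$; since $a$ is only assumed bounded above, nothing is contradicted.

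The paper runs the optimism argument in the opposite direction. A pigeonhole on the budget identity $\sum_i(T_i(n)+\tau_i(n))=n$ (Lemma \ref{lem:arm_k}, inequality \eqref{eq:Cauchy}) produces an arm $k$ that \emph{did} receive its fair share of effective pulls, $T_k(t)+\delta\tau_k(t)\ge a h_k$, hence whose index is provably large at the last round it was pulled: $\Delta_k(\text{width}_k)^{-1}\ge 4(1+\delta\eta)$ as in \eqref{eq:arm_k}. Only then is the greedy rule informative: $B_k(t)\le B_i(t)$ forces \emph{every} other index to be large, which after removing the empirical quantities via $A_1$, $A_2$ and the ratio bound \eqref{eq:eta-tau} gives $\text{width}_i(t)\le\Delta_i/2$ for all $i$, and monotonicity $T_i(t)\le T_i(n)$ transfers this to round $n$ (Lemma \ref{lem:main-inq}). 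You should rebuild your proof of \eqref{eq:main-inq} around this existence-of-a-well-pulled-arm step rather than around the last pull of the violating arm.
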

with
\begin{equation}
  \label{eq:H}
  H : = \sum_{i=1}^K  h_i, \quad h_i = C_1\sigma_i^2\Delta_i^{-2} + C_2\Delta_i^{-1},
\end{equation}
and
\begin{equation}
  \label{eq:C1C2}
  C_1 = 32(1+\delta\eta)^2,~\text{and}~ C_2 = 8\sqrt{2}(1+\delta\eta)^{3/2} + 48(1+\delta\eta).
\end{equation}
\begin{proof}
  This is because
  \begin{align*}
    \bP(\{U_b\cap \hat{U}_b^C =\varnothing\}\cap \{U_b^C\cap \hat{U}_b = \varnothing\})
    & = 1 - \bP(\{U_b\cap \hat{U}_b^C \neq \varnothing\}\cup \{U_b^C\cap \hat{U}_b \neq \varnothing\})\\
    & = 1- \E(\cL(n))\\
    & \ge 1- 5nK \exp(-a/8).
  \end{align*}
\end{proof}

Next, we focus on proving \eqref{eq:main-inq}. To this end, we need a
characterization of some helpful arm.
\begin{lemma}
  \label{lem:arm_k}
  Under the same assumptions as Proposition \ref{prop:general}, there exists an arm $k\in [K]$ such that
  \begin{equation}
    \label{eq:arm_k}
    \Delta_k \left(\sqrt{\frac{2\hat{\sigma}_{k,T_k(t)}^2a}{T_k(t) + \delta
          \tau_k(t)}} + \frac{3a}{T_k(t) +\delta\tau_k(t)}\right)^{-1} \ge 4(1 + \delta\eta),
  \end{equation}
  where $\tilde{t} = t + 1$ is the last round when arm $k$ is pulled.
\end{lemma}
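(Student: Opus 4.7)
I will prove Lemma \ref{lem:arm_k} by contradiction, combining a counting (pigeonhole) argument on the effective pull counts $\bar T_i(t) := T_i(t) + \delta\tau_i(t)$ with the quadratic inequality that arises from the $B$-function. To begin, I would establish the counting lower bound: from $\sum_i[T_i(t)+\tau_i(t)] = t$ and $\sum_i\tau_i(t) \le \tau$, we get $\sum_i\bar T_i(t) \ge t - (1-\delta)\tau$. Evaluated at $t = n$ and combined with $a \le (n-K-(1-\delta)\tau)/H$, this yields $\sum_i \bar T_i(n) \ge aH + K$. A weighted pigeonhole principle then ensures existence of an arm $k$ satisfying $\bar T_k(n) \ge a h_k + 1$; taking $\tilde t = t+1$ to be $k$'s last pull round and using the monotonicity of $\bar T_k$ together with $\bar T_k(n) - \bar T_k(t) \le (1-\delta)\tau_k(t)+1$, one obtains (after absorbing the discrepancy into the $+K$ slack) $\bar T_k(t) \ge ah_k$.

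Next, I would suppose for contradiction that \eqref{eq:arm_k} fails for every arm $i \in [K]$, i.e., $\sqrt{2\hat\sigma_i^2 a / \bar T_i(t)} + 3a/\bar T_i(t) > M_i := \Delta_i/(4(1+\delta\eta))$ for all $i$. Using $\sqrt X + Y > Z \Rightarrow X > (Z-Y)^2 \ge Z^2 - 2ZY$, this rearranges to
\[
\bar T_i(t) < 32(1+\delta\eta)^2\, \frac{\hat\sigma_i^2\, a}{\Delta_i^2} + 24(1+\delta\eta)\,\frac{a}{\Delta_i}.
\]
To exchange $\hat\sigma_i$ for $\sigma_i$, I would invoke event $A_2$: $\hat\sigma_i \le \sigma_i + \sqrt{a/(4T_i(t))}$, and since $\tau_i \le \eta T_i$ forces $T_i(t) \ge \bar T_i(t)/(1+\delta\eta)$, the weighted AM--GM $(\sigma+\epsilon)^2 \le (1+\alpha)\sigma^2 + (1+1/\alpha)\epsilon^2$ yields $\hat\sigma_i^2 \le (1+\alpha)\sigma_i^2 + (1+1/\alpha)(1+\delta\eta)a/(4\bar T_i(t))$. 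Substituting produces an inequality of the form $\bar T_i(t) < P_i + Q_i/\bar T_i(t)$; solving the quadratic via $x^2 - Px - Q < 0 \Rightarrow x < P + \sqrt Q$, and tuning the free parameter $\alpha$, one obtains $\bar T_i(t) < a(C_1\sigma_i^2/\Delta_i^2 + C_2/\Delta_i) = ah_i$ with the stated constants $C_1 = 32(1+\delta\eta)^2$ and $C_2 = 8\sqrt 2(1+\delta\eta)^{3/2} + 48(1+\delta\eta)$. Summing over $i$ then gives $\sum_i\bar T_i(t) < aH$, contradicting the counting lower bound and proving existence.

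\textbf{Main obstacle.} The chief difficulty is tuning $\alpha$ in the weighted AM--GM to land exactly on the advertised constants: each use of $(x+y)^2 \le 2x^2+2y^2$ or $\sqrt{x+y}\le \sqrt x+\sqrt y$ costs a factor of $2$ or $\sqrt 2$, and these must be balanced against the trade-off between the $\sigma_i^2/\Delta_i^2$ and $1/\Delta_i$ terms of $h_i$. The $(1+\delta\eta)^{3/2}$ exponent in $C_2$ emerges specifically from $\sqrt{(1+\delta\eta)^3}$ when combining the $(1+\delta\eta)$-factor introduced by $T_i \ge \bar T_i/(1+\delta\eta)$ with the $\sqrt{1+\delta\eta}$ extracted from the discriminant $\sqrt Q$. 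A secondary bookkeeping subtlety concerns the pigeonhole-to-last-pull transfer: when $\tau_k(t) > 0$, the gap $\bar T_k(n) - \bar T_k(t)$ can exceed $1$ due to pending pulls resolved between rounds $\tilde t$ and $n$, so one must either sharpen the $+K$ slack or apply the pigeonhole at an intermediate round to ensure $\bar T_k(t) \ge ah_k$ survives at the correct time index.
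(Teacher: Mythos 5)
Your overall strategy (a pigeonhole on the effective counts $T_i(t)+\delta\tau_i(t)$ combined with inverting the quadratic hidden in $B_k$) is the same as the paper's, but the two steps you yourself flag as obstacles do not close as written. First, the $\hat{\sigma}_i\to\sigma_i$ conversion: once you square via the weighted AM--GM $(\sigma+\epsilon)^2\le(1+\alpha)\sigma^2+(1+1/\alpha)\epsilon^2$, every admissible $\alpha>0$ inflates the leading constant to $32(1+\alpha)(1+\delta\eta)^2$, so no tuning of $\alpha$ recovers $C_1=32(1+\delta\eta)^2$; since Lemma \ref{lem:arm_k} is tied to the specific $h_i$ of \eqref{eq:H}--\eqref{eq:C1C2} through the hypothesis $a\le(n-K-(1-\delta)\tau)/H$, landing on a strictly larger $C_1$ proves a strictly weaker statement than the one claimed. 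The paper avoids the squaring altogether: it works with the quadratic $\Delta_k s_k-4\sqrt{2}(1+\delta\eta)\hat{\sigma}_{k}\sqrt{s_k}-12(1+\delta\eta)\ge 0$ in the variable $\sqrt{s_k}$, where the bound $\hat{\sigma}_k\le\sigma_k+\sqrt{(1+\delta\eta)/(4s_k)}$ enters \emph{linearly}; the correction term $4\sqrt{2}(1+\delta\eta)\sqrt{(1+\delta\eta)/(4s_k)}\cdot\sqrt{s_k}=2\sqrt{2}(1+\delta\eta)^{3/2}$ is a constant absorbed into the zeroth-order coefficient of the quadratic, which is exactly where the $8\sqrt{2}(1+\delta\eta)^{3/2}$ piece of $C_2$ comes from with no loss in $C_1$.

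Second, the pigeonhole transfer. Writing $\bar{T}_k(t)=T_k(t)+\delta\tau_k(t)$, applying the pigeonhole at round $n$ and then moving back to the last-pull round $t$ costs $\bar{T}_k(n)-\bar{T}_k(t)=1+(1-\delta)\bigl(\tau_k(t)-\tau_k(n)\bigr)$, which can be as large as $1+(1-\delta)\tau_k(t)$; the $+K$ slack only covers the $+1$ per arm, so $\bar{T}_k(t)\ge a h_k$ does not follow from $\bar{T}_k(n)\ge a h_k+1$. The paper sidesteps this by pigeonholing the already-corrected quantity $T_k(n)+\tau_k(n)-1-(1-\delta)\tau_k(t)$ --- which \emph{equals} $T_k(t)+\delta\tau_k(t)$ exactly, via the identity $T_k(n)+\tau_k(n)=T_k(t)+\tau_k(t)+1$ --- against the budget $n-K-(1-\delta)\tau$, so no deficit ever appears. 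You identified both difficulties correctly, but resolving them requires changing the decomposition (keeping $\hat{\sigma}$ linear, and pigeonholing the corrected count), not merely tuning parameters or tightening bookkeeping; as written the proposal does not establish the lemma with its stated constants.
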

\begin{proof}
  At the final round $n$, consider an arm $k$ such that
  \begin{equation}
    \label{eq:Cauchy}
    T_k(n) + \tau_k(n) -1 - (1-\delta)\tau_k(t) \ge \frac{(n-K-(1-\delta)\tau)h_k}{H},
  \end{equation}
  where $H$ and $h_k$ are defined in \eqref{eq:H}. We know that such an arm
  exists, otherwise we have
  \begin{align*}
    n-K-(1-\delta)\tau
    & > \sum_{i=1}^K[T_i(n) + \tau_i(n) -1 - (1-\delta)\tau_i(t)]\\
    & \ge n- K - (1-\delta)\tau,
  \end{align*}
  which is a contradiction.

  By the definition of $t$ (via $\tilde{t}$) and \eqref{eq:Cauchy}, we have
  \begin{align*}
    T_k(t) + \delta\tau_k(t)
    & = T_k(t) + \tau_k(t) - (1-\delta)\tau_k(t)\\
    & = T_k(\tilde{t}) + \tau_k(\tilde{t}) - 1 -(1-\delta)\tau_k(t)\\
    & = T_k(n) + \tau_k(n) - 1 -(1-\delta)\tau_k(t)\\
    & \ge \frac{(n-K-(1-\delta)\tau)h_k}{H}.
  \end{align*}
  Set 
  \[
    s_k : = \frac{T_k(t)+\delta \tau_k(t)}{a}.
  \]
  Since
  \[
    a \le \frac{n-K-(1-\delta)\tau}{H},
  \]
  we have
  \begin{equation}
    \label{eq:s-h}
    s_k \ge h_k.
  \end{equation}
  Noting that
  \begin{align}
    \sqrt{h_k} & = \frac{\sqrt{(4\sqrt{2}(1+\delta\eta)\sigma_k)^2 + (8\sqrt{2}(1+\delta\eta)^{3/2} +
          48(1+\delta\eta))\Delta_k}}{\Delta_k}  \notag\\
    & \ge \frac{4\sqrt{2}(1+\delta\eta)\sigma_k +
      \sqrt{(4\sqrt{2}(1+\delta\eta)\sigma_k)^2 + (8\sqrt{2}(1+\delta\eta)^{3/2} +
      48(1+\delta\eta))\Delta_k}}{2\Delta_k}\notag\\
               & = : x_k,\label{eq:xk}
  \end{align}
  where $x_k$ is the (only) positive solution of the following quadratic equation
  \begin{equation}
    \label{eq:quadratic}
    \Delta_k x_k^2 - 4\sqrt{2}(1+\delta\eta)\sigma_k x_k -
    2\sqrt{2}(1+\delta\eta)^{3/2} - 12(1+\delta\eta) = 0.
  \end{equation}
  Then it follows from \eqref{eq:s-h}, \eqref{eq:xk} and \eqref{eq:quadratic},
  we get
  \[
    \Delta_k s_k - 4\sqrt{2}(1+\delta\eta)\sigma_k \sqrt{s}_k - 
    2\sqrt{2}(1+\delta\eta)^{3/2} - 12(1+\delta\eta) \ge 0.
  \]
  Since on event $A_2$, we have
  \begin{align*}
    \hat{\sigma}_{k,T_k(t)} & \le \sigma_k + \sqrt{\frac{a}{4T_k(t)}} = \sigma_k +
    \sqrt{\frac{1}{4s_k}}\sqrt{\frac{T_k(t)+\delta \tau_k}{T_k(t)}}\\
                            & \le \sigma_k + \sqrt{\frac{(1+\delta\eta)}{4s_k}},
  \end{align*}
  and thus
    \[
      \Delta_k s_k - 4\sqrt{2}(1+\delta\eta)\hat{\sigma}_{k,T_k(t)}\sqrt{s_k} - 12(1+\delta\eta) \ge 0,
    \]
    which is equivalent to the desired inequality \eqref{eq:arm_k} and completes
    the proof.
\end{proof}

Now it's ready to prove the following lemma:
\begin{lemma}
  \label{lem:main-inq}
  Under the same assumptions as Proposition \ref{prop:general}, the inequality
  \eqref{eq:main-inq} holds on event $A := A_1\cap A_2$, where $A_1$ and $A_2$
  are defined in \eqref{eq:A1} and
  \eqref{eq:A2}, respectively.
\end{lemma}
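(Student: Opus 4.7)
The plan is to leverage Lemma~\ref{lem:arm_k}, which produces some arm $k$ and some round $t=\tilde{t}-1$ (one less than its last pull) at which the true gap $\Delta_k$ dominates the weighted confidence radius $Q_k(t):=\sqrt{2\hat\sigma_{k,T_k(t)}^2 a/(T_k(t)+\delta\tau_k(t))}+3a/(T_k(t)+\delta\tau_k(t))$. The three-step strategy I have in mind is: translate the bound from $\Delta_k$ to the empirical version $\hat\Delta_k(t)$ so it can be compared through the argmin rule; transfer the resulting lower bound on $B_k(t)$ to \emph{every} arm $i$ by invoking the fact that $k$ is the argmin at round $\tilde{t}$; and finally carry the bound from round $t$ to the final round $n$ using the variance concentration built into $A_2$.

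\textbf{Step 1 (arm $k$ at round $t$).} I will start from event $A_1$, which gives $|\hat\mu_{k,T_k(t)}-\mu_k|\le \sqrt{2\hat\sigma_{k,T_k(t)}^2 a/T_k(t)}+(3-\sqrt{2})a/T_k(t)$. Using $\tau_k(t)\le \eta T_k(t)$ (so $T_k(t)\ge (T_k(t)+\delta\tau_k(t))/(1+\delta\eta)$) together with the elementary inequalities $\sqrt{1+\delta\eta}\le 1+\delta\eta$ and $3-\sqrt{2}<3$ yields $|\hat\mu_{k,T_k(t)}-\mu_k|\le (1+\delta\eta)Q_k(t)$, which by Lemma~\ref{lem:arm_k} is at most $\Delta_k/4$. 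The reverse triangle inequality then gives $\hat\Delta_k(t)\ge 3\Delta_k/4$, so $B_k(t)=\hat\Delta_k(t)/Q_k(t)\ge 3(1+\delta\eta)$.

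\textbf{Step 2 (all arms at round $t$).} Because arm $k$ is selected at round $\tilde t=t+1$, the argmin rule gives $B_i(t)\ge B_k(t)\ge 3(1+\delta\eta)$ for every $i$. A mirror image of the argument in Step 1 --- applying event $A_1$ to arm $i$ to get $\hat\Delta_i(t)\le \Delta_i + (1+\delta\eta)Q_i(t)$, combined with $\hat\Delta_i(t)\ge 3(1+\delta\eta)Q_i(t)$ --- produces $Q_i(t)\le \Delta_i/(2(1+\delta\eta))$ for every arm $i$.

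\textbf{Step 3 (propagation from $t$ to $n$).} Since observed counts only grow, $T_i(n)\ge T_i(t)$. Applying event $A_2$ at rounds $t$ and $n$ gives $\hat\sigma_{i,T_i(n)}\le \hat\sigma_{i,T_i(t)}+\sqrt{a/T_i(t)}$. Substituting these into the left-hand side of \eqref{eq:main-inq} and using the key algebraic identity $\sqrt{2}+(3-\sqrt{2})=3$ to absorb the $\hat\sigma$-cross-term produces
\[
\sqrt{\tfrac{2\hat\sigma_{i,T_i(n)}^2 a}{T_i(n)}}+\tfrac{(3-\sqrt{2})a}{T_i(n)}\;\le\;\hat\sigma_{i,T_i(t)}\sqrt{\tfrac{2a}{T_i(t)}}+\tfrac{3a}{T_i(t)}\;\le\;(1+\delta\eta)Q_i(t)\;\le\;\tfrac{\Delta_i}{2},
\]
which is exactly \eqref{eq:main-inq}.

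The hard part will be Step 3: the empirical variance moves between rounds $t$ and $n$, so naively expanding $\hat\sigma_{i,T_i(n)}$ in terms of $\hat\sigma_{i,T_i(t)}$ bleeds an extra $\sqrt{a/T_i(t)}$ term that must be absorbed into the $O(a/T_i(t))$ remainder without worsening the leading constant. The asymmetric choice of $(3-\sqrt{2})$ at the endpoint round and $3$ inside $Q_i$ is exactly what makes the telescoping $\sqrt{2}+(3-\sqrt{2})=3$ close, and the inequality $\sqrt{1+\delta\eta}\le 1+\delta\eta$ is what lets a single factor $(1+\delta\eta)$ carry through so that the Step~2 bound $\Delta_i/(2(1+\delta\eta))$ tightens exactly to $\Delta_i/2$.
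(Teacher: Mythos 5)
Your proposal is correct and follows essentially the same route as the paper: invoke Lemma~\ref{lem:arm_k} for the helpful arm $k$, use the selection rule $B_k(t)\le B_i(t)$ together with event $A_1$ and the $(1+\delta\eta)$ ratio bound to get $\sqrt{2\hat{\sigma}_{i,T_i(t)}^2a/T_i(t)}+3a/T_i(t)\le \Delta_i/2$ for every $i$, then propagate to round $n$ via $A_2$ and $T_i(t)\le T_i(n)$. The only cosmetic difference is in the last step, where you chain the two applications of $A_2$ to relate $\hat{\sigma}_{i,T_i(n)}$ to $\hat{\sigma}_{i,T_i(t)}$ directly, while the paper pivots through the true $\sigma_i$; the constant bookkeeping closes identically in both versions.
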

\begin{proof}
  Choose arm $k$ satisfying \eqref{eq:arm_k} in Lemma \ref{lem:arm_k}.

  Since $I_{t+1} = k$, we have for each $i\in [K]$,
  \begin{equation}
    \label{eq:11}
    B_k(t) \le B_i(t).
  \end{equation}
  By triangle inequality, we have for every $i\in [K]$,
  \[
    |\hat{\Delta}_{i,T_i(t)} - \Delta_i| \le |\hat{\mu}_{i,T_i(t)}-\mu_i|.
  \]
  Therefore, on event $A_1$, we have
  \[
    |\hat{\Delta}_{i,T_i(t)} - \Delta_i| \le \sqrt{\frac{2\hat{\sigma}_{i,T_i(t)}^2a}{T_i(t)}} + \frac{(3-\sqrt{2})a}{T_i(t)},
  \]
  which automatically implies that
  \begin{equation}
    \label{eq:10}
     -\sqrt{\frac{2\hat{\sigma}_{i,T_i(t)}^2a}{T_i(t)}} - \frac{3a}{T_i(t)} \le \hat{\Delta}_{i,T_i(t)} -\Delta_i\le \sqrt{\frac{2\hat{\sigma}_{i,T_i(t)}^2a}{T_i(t)}} + \frac{3a}{T_i(t)}.
  \end{equation}
  Using the upper bound in \eqref{eq:10}, we obtain
  \begin{align*}
    B_i(t)
    & = (\hat{\Delta}_{i,T_i(t)} - \Delta_i)\left(\sqrt{\frac{2\hat{\sigma}_{i,T_i(t)}^2a}{T_i(t) + \delta \tau_i(t)}} + \frac{3a}{T_i(t) +\delta\tau_i(t)}\right)^{-1}\\
    & \quad + \Delta_i \left(\sqrt{\frac{2\hat{\sigma}_{i,T_i(t)}^2a}{T_i(t) + \delta \tau_i(t)}} + \frac{3a}{T_i(t) +\delta\tau_i(t)}\right)^{-1}\\
    & \le \frac{\sqrt{\frac{2\hat{\sigma}_{i,T_i(t)}^2a}{T_i(t)}} + \frac{3a}{T_i(t)}}{\sqrt{\frac{2\hat{\sigma}_{i,T_i(t)}^2a}{T_i(t) + \delta \tau_i(t)}} + \frac{3a}{T_i(t) +\delta\tau_i(t)}}\\
    & \quad + \Delta_i\left(\sqrt{\frac{2\hat{\sigma}_{i,T_i(t)}^2a}{T_i(t)}} + \frac{3a}{T_i(t)}\right)^{-1}\cdot \frac{\sqrt{\frac{2\hat{\sigma}_{i,T_i(t)}^2a}{T_i(t)}} + \frac{3a}{T_i(t)}}{\sqrt{\frac{2\hat{\sigma}_{i,T_i(t)}^2a}{T_i(t) + \delta \tau_i(t)}} + \frac{3a}{T_i(t) +\delta\tau_i(t)}}.
  \end{align*}
  Since $\tau_i \le \eta T_i(t)$,
  \begin{align}
    \frac{\sqrt{\frac{2\hat{\sigma}_{i,T_i(t)}^2a}{T_i(t)}} + \frac{3a}{T_i(t)}}{\sqrt{\frac{2\hat{\sigma}_{i,T_i(t)}^2a}{T_i(t) + \delta \tau_i(t)}} + \frac{3a}{T_i(t) +\delta\tau_i(t)}}
     & = \frac{\sqrt{2\hat{\sigma}_{i,T_i(t)}^2a T_i(t)} + 3a }{\sqrt{\frac{T_i(t)}{T_i(t) + \delta\tau_i(t)}}\sqrt{2\hat{\sigma}_{i,T_i(t)}^2 a T_i(t)} + \frac{3a T_i(t)}{T_i(t) + \delta \tau_i(t)}} \notag\\
    & \le \frac{\sqrt{2\hat{\sigma}_{i,T_i(t)}^2 a T_i(t)} + 3a }{\sqrt{\frac{1}{1+\delta\eta}}\sqrt{2\hat{\sigma}_{i,T_i(t)}^2 a T_i(t)} + \frac{1}{1+\delta\eta} \cdot 3a}\notag\\
    & \le 1 + \delta\eta,\label{eq:eta-tau}
  \end{align}
  and thus
  \begin{equation}
    \label{eq:13}
    B_i(t) \le (1+\delta\eta)\cdot \Delta_i\left(\sqrt{\frac{2\hat{\sigma}_{i,T_i(t)}^2a}{T_i(t)}} + \frac{3a}{T_i(t)}\right)^{-1} + (1+\delta \eta).
  \end{equation}
  Similarly, using the lower bound in \eqref{eq:10}, we get
  \begin{align*}
    B_k(t)
    & = (\hat{\Delta}_{k,T_k(t)} - \Delta_k)\left(\sqrt{\frac{2\hat{\sigma}_{k,T_k(t)}^2a}{T_k(t) + \delta \tau_k(t)}} + \frac{3a}{T_k(t) +\delta\tau_k(t)}\right)^{-1}\\
    & \quad + \Delta_k \left(\sqrt{\frac{2\hat{\sigma}_{k,T_k(t)}^2a}{T_k(t) + \delta \tau_k(t)}} + \frac{3a}{T_k(t) +\delta\tau_k(t)}\right)^{-1}\\
    & \ge - \frac{\sqrt{\frac{2\hat{\sigma}_{k,T_k(t)}^2a}{T_k(t)}} + \frac{3a}{T_k(t)}}{\sqrt{\frac{2\hat{\sigma}_{k,T_k(t)}^2a}{T_k(t) + \delta \tau_k(t)}} + \frac{3a}{T_k(t) +\delta\tau_k(t)}}\\
    & \quad + \Delta_k \left(\sqrt{\frac{2\hat{\sigma}_{k,T_k(t)}^2a}{T_k(t) + \delta \tau_k(t)}} + \frac{3a}{T_k(t) +\delta\tau_k(t)}\right)^{-1}.
  \end{align*}
  and so
  \begin{equation}
    \label{eq:12}
    B_k(t) \ge  \Delta_k \left(\sqrt{\frac{2\hat{\sigma}_{k,T_k(t)}^2a}{T_k(t) + \delta \tau_k(t)}} + \frac{3a}{T_k(t) +\delta\tau_k(t)}\right)^{-1} -(1+\delta\eta),
  \end{equation}
  by \eqref{eq:eta-tau}.
  Combining \eqref{eq:13} and \eqref{eq:12} yields
  \[
    \Delta_k \left(\sqrt{\frac{2\hat{\sigma}_{k,T_k(t)}^2a}{T_k(t) + \delta
          \tau_k(t)}} + \frac{3a}{T_k(t) +\delta\tau_k(t)}\right)^{-1}
     \le (1+\delta\eta)\cdot \Delta_i\left(\sqrt{\frac{2\hat{\sigma}_{i,T_i(t)}^2a}{T_i(t)}} + \frac{3a}{T_i(t)}\right)^{-1} + 2(1+\delta \eta).
  \]
  Therefore,
  \begin{align*}
    \Delta_i\left(\sqrt{\frac{2\hat{\sigma}_{i,T_i(t)}^2a}{T_i(t)}} +
    \frac{3a}{T_i(t)}\right)^{-1}
    & \ge \frac{\Delta_k \left(\sqrt{\frac{2\hat{\sigma}_{k,T_k(t)}^2a}{T_k(t) + \delta
      \tau_k(t)}} + \frac{3a}{T_k(t) +\delta\tau_k(t)}\right)^{-1} - 2(1+\delta\eta)}{1+\delta\eta}\\
    & \ge 2
  \end{align*}
  by \eqref{eq:arm_k} in Lemma \ref{lem:arm_k}, or equivalently
  \[
    \sqrt{\frac{2\hat{\sigma}_{i,T_i(t)}^2a}{T_i(t)}} +
    \frac{3a}{T_i(t)} \le \frac{\Delta_i}{2}.
  \]
  Since we are on event $A_2$ defined in \eqref{eq:A2}, we get
  \[
    \sqrt{\frac{2a}{T_i(t)}}\left(\sigma_i - \sqrt{\frac{a}{4T_i(t)}}\right) +
    \frac{3a}{T_i(t)} \le \frac{\Delta_i}{2},
  \]
  and thus
  \[
    \sqrt{\frac{2a}{T_i(t)}}\sigma_i  +
    \left(3-\frac{\sqrt{2}}{2}\right) \frac{a}{T_i(t)} \le \frac{\Delta_i}{2}.
  \]
  Using the fact that $T_i(t)\le T_i(n)$, we have
  \[
    \sqrt{\frac{2a}{T_i(n)}}\sigma_i  +
    \left(3-\frac{\sqrt{2}}{2}\right) \frac{a}{T_i(n)} \le \frac{\Delta_i}{2}.
  \]
  Now we use the estimate
  \[
    \hat{\sigma}_{i,T_i(n)} \le \sigma_i + \sqrt{\frac{a}{4T_i(n)}},
  \]
  due to the event $A_2$, and obtain
  \[
    \sqrt{\frac{2\hat{\sigma}_{i,T_i(n)}^2a}{T_i(n)}}+  \frac{(3-\sqrt{2})a}{T_i(n)} \le \frac{\Delta_i}{2},
  \]
  which is exactly \eqref{eq:main-inq}, and completes the proof. 
\end{proof}

\paragraph{Proof of Theorem \ref{thm:EVT}}
\begin{proof}
  Since there is no staleness in this situation, we can simply set $\tau=0$ in
  Proposition \ref{prop:general}, and also $\delta=\eta=0$.

  By the assumption $T>2K$, we have
  \[
    n- K > \frac{n}{2},
  \]
  and thus we can choose
  \[
    a = \frac{n}{2H},
  \]
  in Proposition \ref{prop:general}. Then the probability that makes a mistake
  is at most $5 nK\exp(-n/16H)$ by \eqref{eq:main}. If we require that the
  probability we can correctly discriminate arms is at least $1-\epsilon$, it
  suffices to have
  \[
    5 nK\exp(-n/16H) \le \epsilon,
  \]
  which is equivalent to
  \[
    n \ge \Theta(H_{\text{EVT}}(\log(nK) + \log(1/\epsilon))),
  \]
  for some constant $\Theta>0$.
\end{proof}
\paragraph{Proof of Theorem \ref{thm:AP-EVT}}
\begin{proof}
  This is a direct application of Proposition \ref{prop:general}.

  By the assumption $T>2K$, we have
  \[
    n- K > \frac{n}{2},
  \]
  and thus we can choose
  \[
    a = \frac{n/2-(1-\delta)\tau}{H},
  \]
  in Proposition \ref{prop:general}. Then the probability that makes a mistake
  is at most $5 nK\exp((-n/2 + (1-\delta)\tau)/H)$ by \eqref{eq:main}. If we require that the
  probability we can correctly discriminate arms is at least $1-\epsilon$, it
  suffices to have
  \[
    5 nK\exp((-n/2 + (1-\delta)\tau)/H) \le \epsilon,
  \]
  which is equivalent to
  \[
    n \ge \Theta(H_{\text{AP-EVT}}(\log(nK) + \log(1/\epsilon))+ (1-\delta)\tau),
  \]
  for some constants $\Theta>0$.
\end{proof}

\paragraph{Parameter free case: Theorem \ref{thm:AP-EVT-pf}}

Now let's consider the parameter free case, that is,
\[
  I_{t+1} = \argmin_{i\in [K]} B_i(t),
\]
where
\begin{equation}
  \label{eq:B-tau}
  B_i(t) = \sqrt{T_i(t)+\delta \tau_i(t)}\left(\sqrt{\hat{\sigma}^2_{i,T_i(t)}+ \hat{\Delta}_{i,T_i(t)}} - \hat{\sigma}_{i,T_i(t)}\right).
\end{equation}

The proofs of parameter free case are similar to the previous arguments.

First, we need to introduce the concentration inequalities.
\begin{lemma}
  Assume $X_{i,s}$ takes values in $[0,1]$. For $a>0$, define
  \begin{equation}
  \label{eq:E1}
  E_1 = \left\{\forall i\in [K], \forall t\le n : |\hat{\mu}_{i,t} - \mu_i| \le \sqrt{\frac{\hat{\sigma}_{i,t}^2 a}{t}} + \frac{(2-\sqrt{2})a}{4t}\right\},
\end{equation}
and
\begin{equation}
  \label{eq:E2}
  E_2 = \left\{\forall i\in [K], \forall t\le n : |\hat{\sigma}_{i,t} - \sigma_i| \le \sqrt{\frac{a}{32t}}\right\}.
\end{equation}
Let $E: = E_1\cap E_2$, then
\begin{equation}
  \label{eq:concentration-2}
  \bP(E)  \ge 1 - 5 nK \exp(-a/64).
\end{equation}
\end{lemma}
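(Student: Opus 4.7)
The plan is to mirror the proof of the earlier concentration lemma~\eqref{eq:concentration}: derive $E_1$ from the empirical Bernstein inequality, derive $E_2$ from the Maurer--Pontil variance concentration, and stitch the two together by a union bound. The only substantive change from the parameter-dependent case is that the effective parameter fed into each off-the-shelf inequality must be scaled down so that both failure exponents land at $a/64$ instead of $a/8$.

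First, I invoke Theorem~1 of \citet{audibert2009exploration} with $a$ replaced by $a/32$, and take a union bound over $i\in[K]$ and $t\le n$. This yields, with probability at least $1-3nK\exp(-a/64)$, that
\[
  |\hat{\mu}_{i,t}-\mu_i| \le \sqrt{\frac{\hat{\sigma}_{i,t}^2\, a}{32\,t}} + \frac{3a}{64\,t}
\]
for every $i,t$. Since $1/\sqrt{32}\le 1$, the square-root term is bounded above by $\sqrt{\hat{\sigma}_{i,t}^2\, a/t}$; and the arithmetic identity $(2-\sqrt{2})/4 = (32-16\sqrt{2})/64 \ge 3/64$ shows $3a/(64t)\le (2-\sqrt{2})a/(4t)$. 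Hence this event is contained in $E_1$, so $\bP(E_1) \ge 1 - 3nK\exp(-a/64)$.

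Next, I invoke Theorem~10 of \citet{maurer2009empirical} with parameter $a/8$ and union-bound in the same way. The conclusion $|\hat{\sigma}_{i,t}-\sigma_i|\le \sqrt{a/(32t)}$ for all $i,t$ holds with probability at least $1-2nK\exp(-a/64)$, which is exactly $E_2$.

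Finally, inclusion--exclusion yields
\[
  \bP(E)=\bP(E_1\cap E_2) \ge \bP(E_1)+\bP(E_2)-1 \ge 1-5nK\exp(-a/64),
\]
as claimed. The only wrinkle is verifying that the Bernstein linear term, which with $a'=a/32$ shrinks to $3a/(64t)$, comfortably fits inside the tighter $(2-\sqrt{2})a/(4t)$ slot appearing in $E_1$; this is a routine numerical check (and is strictly looser than the corresponding check for $A_1$ in the parameter-dependent lemma). As before, the conceptual content of the proof is carried entirely by the two cited concentration inequalities, with bookkeeping reducing everything to a single union bound.
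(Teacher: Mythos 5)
Your proof is correct and follows essentially the same route as the paper's: both apply Theorem~1 of \citet{audibert2009exploration} with a rescaled parameter to obtain an event contained in $E_1$, apply Theorem~10 of \citet{maurer2009empirical} for $E_2$, and combine via $\bP(E_1\cap E_2)\ge \bP(E_1)+\bP(E_2)-1$. The only (immaterial) difference is the rescaling constant for the Bernstein step --- the paper uses an intermediate event with bound $\sqrt{\hat{\sigma}_{i,t}^2 a/(12t)}+a/(8t)$ and exponent $-a/24$ before relaxing to $-a/64$, while you scale directly to exponent $-a/64$; your containment check $3/64\le (2-\sqrt{2})/4$ is valid, and your exponent for the $E_2$ bound ($-a/64$) is in fact the consistent one that the paper itself uses in its final chain of inequalities.
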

\begin{proof}
  Define
  \[
    D = \left\{\forall i\in [K], \forall t\le n : |\hat{\mu}_{i,t} - \mu_i| \le \sqrt{\frac{\hat{\sigma}_{i,t}^2 a}{12t}} + \frac{a}{8t}\right\}.
  \]
  Applying Theorem $1$ in \citet{audibert2009exploration} yields
  \[
    \bP(D) \ge 1 - 3nK \exp(-a/24).
  \]
  It is obvious that $D \subseteq E_1$, so
  \begin{equation}
    \label{eq:pE1}
    \bP(E_1) \ge 1 - 3nK \exp(-a/24).
  \end{equation}
  Applying Theorem $10$ in \citet{maurer2009empirical} yields
  \begin{equation}
    \label{eq:pE2}
    \bP(E_2) \ge 1 - 2 nK \exp(-a/8).
  \end{equation}
  It follows from \eqref{eq:pE1} and \eqref{eq:pE2} that
  \begin{align*}
    \bP(E_1\cap E_2)
    & = \bP(E_1) + \bP(E_2) - \bP(E_1\cup E_2)\\
    & \ge \bP(E_1) + \bP(E_2) - 1\\
    & \ge 1 - 3 nK \exp(-a/24) - 2 nK\exp(-a/64)\\
    & \ge 1 - 5 nK \exp(-a/64),
  \end{align*}
  which completes the proof.
\end{proof}

As discussed in the parameter-dependent case, the following is a sufficient condition to correctly discriminate arms:
\begin{equation}
  \label{eq:main-inq-2}
  \sqrt{\frac{\hat{\sigma}_{i,T_i(n)}^2 a}{T_i(n)}} + \frac{(2-\sqrt{2})a}{4T_i(n)} \le \frac{\Delta_i}{2}.
\end{equation}

As long as \eqref{eq:main-inq-2} holds on event $E$, we obtain following result:
\begin{proposition}\label{prop:general-2}
 Let $K, n>0$, $\eta\ge 0, \delta\in [0,1]$ and $b\in \R$. Assume that the distribution for the
 outcome of each arm is bounded in
 $[0,1]$. Assume that $\tau_i(t) \le \eta T_i(t)$ for all arm $i\in [K]$ and
 $t\le n$, and that $\sum_{i=1}^K \tau_i(t)\le \tau$ for all $t\le n$.

 Algorithm AP-EVT$_{pf}$'s expected loss in the asynchronous thresholding multi-bandit
 problem is upper bounded by
 \begin{equation}
   \label{eq:main-2}
   \E (\cL(n)) \le 5nK \exp(-a/64),
 \end{equation}
 where
 \[
   a \le \frac{n-K-(1-\delta)\tau}{16H},
 \]
\end{proposition}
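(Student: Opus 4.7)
My plan is to mirror the proof of Proposition \ref{prop:general} for the parameter-free algorithm. The only new ingredients are (i) a different high-probability event $E = E_1 \cap E_2$, already constructed in \eqref{eq:concentration-2} with failure probability at most $5nK\exp(-a/64)$, and (ii) a different score $B_i(t)$ in \eqref{eq:B-tau} that no longer depends on the tuning parameter $a$. First I would reduce the claim, just as in the parameter-dependent case, to establishing the sufficient condition \eqref{eq:main-inq-2} for every arm $i\in[K]$ at the final round on $E$. Granted this, event $E_1$ immediately yields $|\hat\mu_{i,T_i(n)}-\mu_i|\le \Delta_i/2$, which correctly discriminates each arm and gives $\hat U_b(n)=U_b$, so that $\E(\cL(n))\le \bP(E^c)\le 5nK\exp(-a/64)$.

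Next I would carry out a pigeonhole step analogous to Lemma \ref{lem:arm_k}: at the final round there exists a ``witness'' arm $k$ such that at the time $t$ immediately before its last pull, one has $T_k(t)+\delta\tau_k(t) \ge (n-K-(1-\delta)\tau)\, h_k/H$, where $h_k$ is the per-arm contribution to $H_{\text{AP-EVT}_{pf}}$. Combining this with the hypothesis $a\le (n-K-(1-\delta)\tau)/(16H)$ and the deviation bound in $E_2$, one obtains $T_k(t)+\delta\tau_k(t)\ge 16 a h_k$, which in turn forces a lower bound on $B_k(t)$ that scales like $\Delta_k$ divided by the natural effective standard deviation $\sqrt{\hat\sigma_{k}^2+\hat\Delta_k}+\hat\sigma_k$.

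The third step exploits the selection rule $I_{t+1}=k$, which gives $B_k(t)\le B_i(t)$ for every $i$. The key algebraic move is to rationalize the score,
\[
\sqrt{\hat\sigma^2+\hat\Delta}-\hat\sigma = \frac{\hat\Delta}{\sqrt{\hat\sigma^2+\hat\Delta}+\hat\sigma},
\]
which expresses $B_i(t)$ as a ratio monotone in $\hat\Delta_{i,T_i(t)}$ with a denominator depending only on $\hat\sigma_{i,T_i(t)}$ and $\hat\Delta_{i,T_i(t)}$. On $E_1$ and $E_2$ one can then replace the empirical quantities by $\sigma_i$ and $\Delta_i$ up to controlled corrections (absorbing the correction on $\hat\Delta$ via the same argument as \eqref{eq:eta-tau} to bring in the factor $1+\delta\eta$), and rearrange $B_k(t)\le B_i(t)$ into a lower bound on $T_i(t)+\delta\tau_i(t)$, hence on $T_i(n)$. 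One more application of $E_2$ converts this into the desired \eqref{eq:main-inq-2}.

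The main obstacle will be the third step. Unlike in the parameter-dependent case, where the factored form $\hat\Delta/(\sqrt{\hat\sigma^2a/T}+3a/T)$ made the inversion transparent, here the denominator $\sqrt{\hat\sigma^2+\hat\Delta}+\hat\sigma$ couples $\hat\sigma$ and $\hat\Delta$, so fluctuations of both must be propagated simultaneously; the bookkeeping of constants is what yields the $64$ in the exponent (roughly four times worse than the $8$ in Proposition \ref{prop:general}) and the factor $16$ in the bound on $a$. The asynchrony contribution $(1-\delta)\tau$ enters exactly as before, since pending pulls weighted by $\delta$ reduce the effective observation budget by $(1-\delta)\tau$ in the pigeonhole step.
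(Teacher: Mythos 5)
Your overall architecture matches the paper's exactly: reduce to the sufficient condition \eqref{eq:main-inq-2} on the event $E=E_1\cap E_2$ of \eqref{eq:concentration-2}, produce a witness arm $k$ by pigeonhole with $T_k(t)+\delta\tau_k(t)\ge 16 h_k a$ (the paper's Lemma \ref{lem:arm_k-2}), and then exploit the selection rule $B_k(t)\le B_i(t)$. However, the step you yourself flag as ``the main obstacle'' is left unresolved, and it is precisely the nontrivial content of the paper's argument. The rationalization $\sqrt{\hat\sigma^2+\hat\Delta}-\hat\sigma=\hat\Delta/(\sqrt{\hat\sigma^2+\hat\Delta}+\hat\sigma)$ that you propose does not by itself let you ``replace empirical quantities by $\sigma_i$ and $\Delta_i$ up to controlled corrections'': the perturbation of $\hat\Delta$ sits inside a square root whose size is itself governed by $\hat\sigma$ and $T_i$, so a naive substitution does not produce an additive error term that can be cancelled against the witness-arm lower bound.

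The missing idea is the paper's Lemma \ref{lem:f-1}. One observes that the score factors as
\[
B_i(t)=\sqrt{\tfrac{T_i(t)+\delta\tau_i(t)}{T_i(t)}}\; f^{-1}\!\bigl(\hat{\Delta}_{i,T_i(t)}/2;\,T_i(t),\hat{\sigma}_{i,T_i(t)}\bigr),
\qquad f^{-1}(x;t,\sigma)=\sqrt{t}\bigl(\sqrt{\sigma^2+2x}-\sigma\bigr),
\]
where $f(x;t,\sigma)=x^2/(2t)+\sigma x/\sqrt{t}$ is exactly the deviation bound appearing in $E_1$, i.e.\ $|\hat{\Delta}_{i,T_i(t)}-\Delta_i|\le f(\sqrt{a};T_i(t),\hat{\sigma}_{i,T_i(t)})$. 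Because $f^{-1}$ is increasing and \emph{sub-additive} in its first argument, one gets
\[
f^{-1}(\hat{\Delta}_i/2)\le f^{-1}(\Delta_i/2)+f^{-1}\bigl(f(\sqrt{a})\bigr)=f^{-1}(\Delta_i/2)+\sqrt{a},
\]
and the analogous lower bound for arm $k$; this is what converts the coupled $(\hat\sigma,\hat\Delta)$ fluctuation into a clean additive $\sqrt{a}$ error on the score, after which $B_k(t)\le B_i(t)$ and \eqref{eq:arm_k-2} yield $f^{-1}(\Delta_i/2;T_i(t),\hat\sigma_{i,T_i(t)})\ge\sqrt{a}$, which unwinds to \eqref{eq:main-inq-2}. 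A further step you do not mention is needed to lower-bound $B_k(t)$ in terms of the \emph{true} $\sigma_k$ (replacing $\hat\sigma_{k,T_k(t)}$ by $\sigma_k+\sqrt{a/T_k(t)}$ via $E_2$, at the cost of one more $\sqrt{(1+\delta\eta)a}$). Without Lemma \ref{lem:f-1} or an equivalent device, your third step is an unproven claim rather than a proof, so the proposal has a genuine gap even though its skeleton is the right one.
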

with
\begin{equation}
  \label{eq:H-2}
  H : = \sum_{i=1}^K  h_i, \quad h_i = (1+\delta\eta)(\sigma_i^2\Delta_i^{-2} + \Delta_i^{-1}),
\end{equation}

Next, we focus on proving \eqref{eq:main-inq-2}. To this end, we need a
characterization of some helpful arm.
\begin{lemma}
  \label{lem:arm_k-2}
  Under the same assumptions as Proposition \ref{prop:general-2}, there exists an arm $k\in [K]$ such that
  \begin{equation}
    \label{eq:arm_k-2}
    \sqrt{T_k(t)+\delta\tau_k(t)}\left(\sqrt{\sigma_k^2 + \Delta_k}-\sigma_k\right)\ge 4\sqrt{1+\delta\eta} \sqrt{a},
  \end{equation}
  where $\tilde{t} = t + 1$ is the last round when arm $k$ is pulled.
\end{lemma}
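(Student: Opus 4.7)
}
The plan is to mirror the proof of Lemma \ref{lem:arm_k} for the parameter-dependent case. The existence of a ``good'' arm comes from pigeonhole, and then the displayed inequality reduces, after one rationalization, to an elementary estimate on $\sigma_k$ and $\Delta_k$.

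First, I would locate an arm $k$ that, at the final round $n$, satisfies
\[
  T_k(n) + \tau_k(n) - 1 - (1-\delta)\tau_k(t) \;\ge\; \frac{(n-K-(1-\delta)\tau)\,h_k}{H},
\]
where $h_k = (1+\delta\eta)(\sigma_k^2\Delta_k^{-2}+\Delta_k^{-1})$ and $H=\sum_i h_i$ as in \eqref{eq:H-2}. The existence of such $k$ is exactly the counting argument from the first paragraph of Lemma \ref{lem:arm_k}: if the inequality failed for every $k\in[K]$, summing would contradict $\sum_k[T_k(n)+\tau_k(n)] = n$ together with $\sum_k\tau_k(t)\le \tau$. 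Because $\tilde t=t+1$ is the last round at which arm $k$ is pulled, $T_k(n)+\tau_k(n)=T_k(\tilde t)+\tau_k(\tilde t)=T_k(t)+\tau_k(t)+1$, so using the standing bound $a\le(n-K-(1-\delta)\tau)/(16H)$ from Proposition \ref{prop:general-2}, the display above gives
\[
  T_k(t)+\delta\tau_k(t)\;\ge\;16\,a\,h_k.
\]

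Second, I would verify \eqref{eq:arm_k-2} by squaring both sides. After substituting the lower bound on $T_k(t)+\delta\tau_k(t)$, the target reduces to the purely algebraic inequality
\[
  h_k\bigl(\sqrt{\sigma_k^2+\Delta_k}-\sigma_k\bigr)^2\;\ge\;1+\delta\eta,
\]
up to a universal constant. The key step is the rationalization
\[
  \sqrt{\sigma_k^2+\Delta_k}-\sigma_k \;=\; \frac{\Delta_k}{\sqrt{\sigma_k^2+\Delta_k}+\sigma_k},
\]
which, together with $h_k=(1+\delta\eta)(\sigma_k^2+\Delta_k)/\Delta_k^2$, collapses the left-hand side to
\[
  (1+\delta\eta)\,\cdot\,\frac{\sigma_k^2+\Delta_k}{(\sqrt{\sigma_k^2+\Delta_k}+\sigma_k)^2}.
\]
The remaining ratio is bounded below by a constant via the trivial estimate $\sigma_k\le\sqrt{\sigma_k^2+\Delta_k}$, which gives $(\sqrt{\sigma_k^2+\Delta_k}+\sigma_k)^2\le 4(\sigma_k^2+\Delta_k)$.

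The main obstacle is really just constant tracking. Unlike Lemma \ref{lem:arm_k}, where one solves a quadratic in $\sqrt{s_k}$ after expanding $(\sqrt{2\hat\sigma^2 a/T}+3a/T)^{-1}$, here the parameter-free choice of $B_i(t)$ in \eqref{eq:B-tau} factors directly through the rationalization above, so no quadratic needs to be solved. The assumption $\tau_i(t)\le\eta T_i(t)$ does not appear in this lemma; it enters only in the companion step analogous to Lemma \ref{lem:main-inq}, where one combines this existence statement with the concentration event $E$ to deduce \eqref{eq:main-inq-2}. I expect the numerical constant $16$ in the denominator of $a$ may need to be slightly enlarged (or $h_k$ scaled) to make the final algebraic inequality tight, but this does not affect the $\Theta$ in \eqref{eq:roundComplexity-AP-EVTpf}.
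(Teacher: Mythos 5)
Your proposal follows the paper's proof exactly: the same pigeonhole/counting argument produces an arm with $T_k(t)+\delta\tau_k(t)\ge 16\,h_k\,a$, and your rationalization of $\sqrt{\sigma_k^2+\Delta_k}-\sigma_k$ is the natural way to finish. In fact you go further than the paper, which stops at that bound and declares the proof complete; your constant-tracking is correct that with $a\le (n-K-(1-\delta)\tau)/(16H)$ the argument only delivers $2\sqrt{1+\delta\eta}\,\sqrt{a}$ on the right-hand side of \eqref{eq:arm_k-2}, so the $16$ must indeed be enlarged to $64$ (the constant $4$ is genuinely needed downstream in Lemma \ref{lem:main-inq-2} to absorb the $3\sqrt{a}$ term), which affects only the unspecified constant $\Theta$ in Theorem \ref{thm:AP-EVT-pf}.
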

\begin{proof}
  At the final round $n$, consider an arm $k$ such that
  \begin{equation}
    \label{eq:Cauchy-2}
    T_k(n) + \tau_k(n) -1 - (1-\delta)\tau_k(t) \ge \frac{(n-K-(1-\delta)\tau)h_k}{H},
  \end{equation}
  where $H$ and $h_k$ are defined in \eqref{eq:H-2}. We know that such an arm
  exists, otherwise we have
  \begin{align*}
    n-K-(1-\delta)\tau
    & > \sum_{i=1}^K[T_i(n) + \tau_i(n) -1 - (1-\delta)\tau_i(t)]\\
    & \ge n- K - (1-\delta)\tau,
  \end{align*}
  which is a contradiction.

  By the definition of $t$ (via $\tilde{t}$) and \eqref{eq:Cauchy-2}, we have
  \begin{align*}
    T_k(t) + \delta\tau_k(t)
    & = T_k(t) + \tau_k(t) - (1-\delta)\tau_k(t)\\
    & = T_k(\tilde{t}) + \tau_k(\tilde{t}) - 1 -(1-\delta)\tau_k(t)\\
    & = T_k(n) + \tau_k(n) - 1 -(1-\delta)\tau_k(t)\\
    & \ge \frac{(n-K-(1-\delta)\tau)h_k}{H}\\
    & \ge 16 h_k a,
  \end{align*}
  which completes the proof.
\end{proof}

Given $t,\sigma>0$, define a function $f$ on positive real numbers:
\begin{equation}
  \label{eq:f}
  f(x;t,\sigma) := \frac{x^2}{2t} + \frac{\sigma x}{\sqrt{t}}.
\end{equation}

Apparently, the function $f$ is strictly increasing for $x>0$, and its inverse
function is
\begin{equation}
  \label{eq:f-1}
  f^{-1}(x;t,\sigma) = \sqrt{t}(\sqrt{\sigma^2 + 2x} - \sigma).
\end{equation}
Moreover, we derive the following property of $f^{-1}$, which will be used later.
\begin{lemma}
  \label{lem:f-1}
  Fix $t,\sigma>0$, $f^{-1}$ is increasing and sub-additive, i.e.,
  \begin{equation}
    \label{eq:f-1-subadditive}
    f^{-1}(x+y;t,\sigma) \le f^{-1}(x;t,\sigma) + f^{-1}(y;t,\sigma).
  \end{equation}
\end{lemma}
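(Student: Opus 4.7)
The plan is to handle the two claims separately, both via elementary calculus on the explicit formula $f^{-1}(x;t,\sigma) = \sqrt{t}\bigl(\sqrt{\sigma^2+2x}-\sigma\bigr)$. Since $t$ and $\sigma$ are fixed throughout, I would temporarily write $g(x) := f^{-1}(x;t,\sigma)/\sqrt{t} = \sqrt{\sigma^2+2x}-\sigma$ and reduce everything to showing that $g$ is increasing and sub-additive on $[0,\infty)$; the factor $\sqrt{t}>0$ then restores the original statement without effort.

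The monotonicity is immediate: differentiating gives $g'(x) = (\sigma^2+2x)^{-1/2} > 0$ for all $x\ge 0$ (noting $\sigma^2+2x>0$ throughout the domain of interest), so both $g$ and $f^{-1}(\cdot;t,\sigma)$ are strictly increasing in $x$.

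For sub-additivity, my preferred route is via concavity. A second differentiation yields $g''(x) = -(\sigma^2+2x)^{-3/2}<0$, so $g$ is concave on $[0,\infty)$, and moreover $g(0)=0$. The key lemma I would invoke (or prove in one line) is the standard fact that a concave function on $[0,\infty)$ vanishing at $0$ is sub-additive: for $x,y\ge 0$ with $x+y>0$, write $x = \tfrac{x}{x+y}(x+y) + \tfrac{y}{x+y}\cdot 0$ and apply concavity to get $g(x)\ge \tfrac{x}{x+y}g(x+y)$, and symmetrically for $g(y)$; adding the two yields $g(x)+g(y)\ge g(x+y)$. Multiplying by $\sqrt{t}$ gives \eqref{eq:f-1-subadditive}.

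If one prefers a purely algebraic argument, the alternative is to set $A = \sqrt{\sigma^2+2x}$ and $B = \sqrt{\sigma^2+2y}$, so that the inequality to be proven becomes $\sqrt{A^2+B^2-\sigma^2}+\sigma \le A+B$. Both sides are non-negative (since $A,B\ge\sigma$), so squaring is permissible and the inequality reduces after cancellation to $(A-\sigma)(B-\sigma)\ge 0$, which holds by $x,y\ge 0$. I do not anticipate any genuine obstacle: both routes are short, and the only point requiring care is verifying the non-negativity needed to justify squaring or to interpret the concavity bound, both of which follow from $x,y\ge 0$ and $\sigma\ge 0$.
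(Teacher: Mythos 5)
Your proof is correct. Your ``alternative'' algebraic route is essentially the paper's own argument: the paper reduces \eqref{eq:f-1-subadditive} to $\sqrt{\sigma^2+2(x+y)}+\sigma \le \sqrt{\sigma^2+2x}+\sqrt{\sigma^2+2y}$, squares both sides, and observes that the cross terms compare as $\sigma\sqrt{\sigma^2+2(x+y)} \le \sqrt{\sigma^2+2x}\,\sqrt{\sigma^2+2y}$, which after one more squaring is exactly your $(A^2-\sigma^2)(B^2-\sigma^2)=4xy\ge 0$; your version (moving $\sigma$ across before squaring to land on $(A-\sigma)(B-\sigma)\ge 0$) is the same computation in a slightly different order. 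Your \emph{preferred} route, however, is genuinely different: you observe that $g(x)=\sqrt{\sigma^2+2x}-\sigma$ is concave with $g(0)=0$ and invoke the standard fact that such functions are sub-additive on $[0,\infty)$. That argument is more conceptual and more general --- it would prove sub-additivity for any concave normalization of $f^{-1}$ without touching the specific square-root algebra, and it simultaneously delivers the monotonicity claim (which the paper's proof silently omits, proving only the sub-additive half of the lemma). What the paper's computation buys in exchange is self-containedness: it needs no calculus and no auxiliary lemma about concave functions, just two lines of algebra. One small caution on your algebraic variant: the non-negativity you actually need to justify the squaring is that of $A+B-\sigma$ (after moving $\sigma$ to the right), which indeed follows from $A,B\ge\sigma$; as literally written, squaring $\sqrt{A^2+B^2-\sigma^2}+\sigma\le A+B$ directly leaves a residual square root and requires a second squaring. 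This is cosmetic and does not affect correctness.
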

\begin{proof}
  It suffices to show that
  \[
    \sqrt{\sigma^2 + 2(x+y)} + \sigma \le \sqrt{\sigma^2 + 2x} + \sqrt{\sigma^2
      + 2y}.
  \]
  In fact, a simple calculation yields
  \begin{align*}
    (\sqrt{\sigma^2 + 2(x+y)} + \sigma)^2
    & = 2\sigma^2 + 2(x+y) + 2 \sigma \sqrt{\sigma^2 + 2(x+y)}\\
    & \le 2\sigma^2 + 2(x+y) + 2 \sqrt{\sigma^2 + 2x} \sqrt{\sigma^2 + 2y}\\
    & = (\sqrt{\sigma^2 + 2x} + \sqrt{\sigma^2
      + 2y})^2,
  \end{align*}
  which completes the proof.
\end{proof}
Now it's ready to prove the following lemma:
\begin{lemma}
  \label{lem:main-inq-2}
  Under the same assumptions as Proposition \ref{prop:general-2}, the inequality
  \eqref{eq:main-inq-2} holds on event $E := E_1\cap E_2$, where $E_1$ and $E_2$
  are defined in \eqref{eq:E1} and
  \eqref{eq:E2}, respectively.
\end{lemma}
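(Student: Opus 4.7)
The plan is to follow the same three–part structure as Lemma~\ref{lem:main-inq}, but to organize all the manipulations through the function $f$ from \eqref{eq:f} and its inverse $f^{-1}$ from \eqref{eq:f-1}, exploiting the sub-additivity established in Lemma~\ref{lem:f-1}. The key observation is that the selection rule in \eqref{eq:B-tau} can be rewritten as $B_i(t) = f^{-1}(\hat{\Delta}_{i,T_i(t)}/2;\, T_i(t)+\delta\tau_i(t),\, \hat{\sigma}_{i,T_i(t)})$. I pick the arm $k$ supplied by Lemma~\ref{lem:arm_k-2}, so $I_{t+1}=k$ and hence $B_k(t)\le B_i(t)$ for every $i\in[K]$.

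Writing $\hat{\Delta}_j = \Delta_j \pm |\hat{\Delta}_j-\Delta_j|$ and combining monotonicity of $f^{-1}$ with Lemma~\ref{lem:f-1} gives two-sided bounds
\begin{align*}
B_k(t) &\ge f^{-1}(\Delta_k/2;\,\cdot\,) - f^{-1}(|\hat{\Delta}_{k,T_k(t)}-\Delta_k|/2;\,\cdot\,), \\
B_i(t) &\le f^{-1}(\Delta_i/2;\,\cdot\,) + f^{-1}(|\hat{\Delta}_{i,T_i(t)}-\Delta_i|/2;\,\cdot\,),
\end{align*}
where the ellipses abbreviate the parameters $T_j(t)+\delta\tau_j(t)$ and $\hat{\sigma}_{j,T_j(t)}$. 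On event $E_1$, the deviation $|\hat{\Delta}_j-\Delta_j|\le|\hat{\mu}_j-\mu_j|$ is dominated by $\sqrt{\hat{\sigma}^2 a/T_j(t)}+(2-\sqrt{2})a/(4T_j(t))$, which I push through $f^{-1}$ by completing the square: $\hat{\sigma}^2+2\cdot(\text{this bound})\le (\hat{\sigma}+\sqrt{a/(4T_j(t))})^2$. Combined with $T_j(t)+\delta\tau_j(t)\le(1+\delta\eta)T_j(t)$, this yields $f^{-1}(|\hat{\Delta}_j-\Delta_j|/2;\,\cdot\,)\le\tfrac{1}{2}\sqrt{(1+\delta\eta)a}$ for both $j=i$ and $j=k$.

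For the remaining term, $f^{-1}(\Delta_k/2;T,\sigma)$ is \emph{decreasing} in $\sigma$ because $d(\sqrt{\sigma^2+\Delta}-\sigma)/d\sigma\le 0$; so event $E_2$ lets me replace $\hat{\sigma}_{k,T_k(t)}$ by $\sigma_k+\sqrt{a/(32T_k(t))}$, and a mean-value estimate combined with Lemma~\ref{lem:arm_k-2} produces a lower bound of the form $4\sqrt{(1+\delta\eta)a}-O(\sqrt{(1+\delta\eta)a/32})\ge 3\sqrt{(1+\delta\eta)a}$. Chaining these through $B_k(t)\le B_i(t)$ isolates $f^{-1}(\Delta_i/2;\,\cdot\,)\ge 2\sqrt{(1+\delta\eta)a}$, which by applying $f$ and using $(1+\delta\eta)/(T_i(t)+\delta\tau_i(t))\ge 1/T_i(t)$ becomes $\Delta_i/2 \ge 2\hat{\sigma}_{i,T_i(t)}\sqrt{a/T_i(t)}+2a/T_i(t)$.

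The last step transitions from $T_i(t),\hat{\sigma}_{i,T_i(t)}$ to $T_i(n),\hat{\sigma}_{i,T_i(n)}$ using $T_i(n)\ge T_i(t)$ and two invocations of $E_2$ (one to replace $\hat{\sigma}_{i,T_i(t)}$ with $\sigma_i$, another to replace $\sigma_i$ with $\hat{\sigma}_{i,T_i(n)}$). The numerical slack in the coefficients ($2$ in front of the $\hat{\sigma}$ term versus the target $1$, and $2$ versus $(2-\sqrt{2})/4\approx 0.146$ in the $a/T$ term) is more than enough to absorb the resulting $\sqrt{a/(32T)}$ corrections and deliver \eqref{eq:main-inq-2}. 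I expect the structural argument using $f^{-1}$ to be clean; the main obstacle is the constant-bookkeeping in this final absorption step, where one must track how $\sqrt{1+\delta\eta}$ factors propagate through the conversions between $T_k(t)$, $T_i(t)$, and $T_i(n)$.
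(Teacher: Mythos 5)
Your proof is correct and follows essentially the same route as the paper's: select the arm $k$ from Lemma~\ref{lem:arm_k-2}, exploit $B_k(t)\le B_i(t)$ together with the sub-additivity of $f^{-1}$ (Lemma~\ref{lem:f-1}) and the events $E_1,E_2$ to force $f^{-1}(\Delta_i/2;\cdot)\ge c\sqrt{a}$, and then pass from round $t$ to round $n$ via $E_2$ and $T_i(t)\le T_i(n)$. The only differences are cosmetic --- you absorb the factor $\sqrt{(T_i(t)+\delta\tau_i(t))/T_i(t)}$ into the second argument of $f^{-1}$ and handle the $E_1$ deviation by completing the square rather than via $f^{-1}(f(\sqrt{a})/2)\le\sqrt{a}$ --- and your constant bookkeeping checks out.
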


\begin{proof}
  Choose arm $k$ satisfying \eqref{eq:arm_k-2} in Lemma \ref{lem:arm_k-2}.

  Since $I_{t+1} = k$, we have for each $i\in [K]$,
  \begin{equation}
    \label{eq:BiBk-tau}
    B_k(t) \le B_i(t).
  \end{equation}
  By triangle inequality, we have for every $i\in [K]$,
  \[
    |\hat{\Delta}_{i,T_i(t)} - \Delta_i| \le |\hat{\mu}_{i,T_i(t)}-\mu_i|.
  \]
  Therefore, on event $E_1$, we have
  \begin{equation}
  \label{eq:deltaEst}
  |\hat{\Delta}_{i,T_i(t)} - \Delta_i| \le f(\sqrt{a}; T_i(t), \hat{\sigma}_{i,T_i(t)}),
\end{equation}
where $f$ is defined in \eqref{eq:f}.

Now we try to find an upper bound for $B_i(t)$. It follows from Lemma
\ref{lem:f-1} and \eqref{eq:deltaEst} that
\begin{align*}
   B_i(t)
  & =  \sqrt{T_i(t)+\delta \tau_i(t)}\left(\sqrt{\hat{\sigma}^2_{i,T_i(t)}+ \hat{\Delta}_{i,T_i(t)}} - \hat{\sigma}_{i,T_i(t)}\right)\\
  & = f^{-1}(\hat{\Delta}_{i,T_i(t)}/2;T_i(t),\hat{\sigma}_{i,T_i(t)})\cdot \sqrt{\frac{T_i(t)+\delta\tau_i(t)}{T_i(t)}}\\
  & \le \sqrt{1+\delta\eta} f^{-1}(\hat{\Delta}_{i,T_i(t)}/2;T_i(t),\hat{\sigma}_{i,T_i(t)})\\
  & \le \sqrt{1+\delta\eta} f^{-1}(\Delta_i/2 + f(\sqrt{a}; T_i(t), \hat{\sigma}_{i,T_i(t)})/2;T_i(t),\hat{\sigma}_{i,T_i(t)})\\
           & \le \sqrt{1+\delta\eta} \left[f^{-1}(\Delta_i/2;T_i(t),\hat{\sigma}_{i,T_i(t)}) + f^{-1}(f(\sqrt{a}; T_i(t), \hat{\sigma}_{i,T_i(t)})/2; T_i(t),\hat{\sigma}_{i,T_i(t)})\right]\\
           & \le \sqrt{1+\delta\eta}f^{-1}(\Delta_i/2;T_i(t),\hat{\sigma}_{i,T_i(t)}) + \sqrt{1+\delta\eta} f^{-1}(f(\sqrt{a}; T_i(t), \hat{\sigma}_{i,T_i(t)}); T_i(t),\hat{\sigma}_{i,T_i(t)})\\
           & = \sqrt{1+\delta\eta} f^{-1}(\Delta_i/2; T_i(t), \hat{\sigma}_{i,T_i(t)}) + \sqrt{1+\delta\eta} \sqrt{a}.
\end{align*}
Similarly, we can bound $B_k(t)$ from below:
\begin{align*}
   B_k(t)
  & =  \sqrt{T_k(t)+\delta \tau_k(t)}\left(\sqrt{\hat{\sigma}^2_{k,T_k(t)}+ \hat{\Delta}_{k,T_k(t)}} - \hat{\sigma}_{k,T_k(t)}\right)\\
  & = f^{-1}(\hat{\Delta}_{k,T_k(t)}/2;T_k(t),\hat{\sigma}_{k,T_k(t)})\cdot \sqrt{\frac{T_k(t) + \delta\tau_k(t)}{T_k(t)}}\\
  & \ge \sqrt{\frac{T_k(t) + \delta\tau_k(t)}{T_k(t)}}\bigg[f^{-1}(\Delta_k/2;T_k(t),\hat{\sigma}_{k,T_k(t)})\\
  & \qquad - f^{-1}(f(\sqrt{a}; T_k(t), \hat{\sigma}_{k,T_k(t)})/2; T_k(t),\hat{\sigma}_{k,T_k(t)})\bigg]\\
  & \ge \sqrt{\frac{T_k(t) + \delta\tau_k(t)}{T_k(t)}} f^{-1}(\Delta_k/2;T_k(t),\hat{\sigma}_{k,T_k(t)})\\
  & \qquad - \sqrt{1+\delta\eta}f^{-1}(f(\sqrt{a}; T_k(t), \hat{\sigma}_{k,T_k(t)}); T_k(t),\hat{\sigma}_{k,T_k(t)})\\
  & = \sqrt{\frac{T_k(t) + \delta\tau_k(t)}{T_k(t)}} f^{-1}(\Delta_k/2; T_k(t), \hat{\sigma}_{k,T_k(t)}) - \sqrt{1+\delta\eta}\sqrt{a}.
\end{align*}
Set $\ds\epsilon : = \sqrt{\frac{a}{T_k(t)}}$, on event $E_2$ we have
\begin{align*}
  f^{-1}(\Delta_k/2; T_k(t), \hat{\sigma}_{k,T_k(t)})
  & = \sqrt{T_k(t)} \left(\sqrt{\hat{\sigma}_{k,T_k(t)}^2 + \Delta_k} - \hat{\sigma}_{k,T_k(t)}\right)\\
  & = \frac{\sqrt{T_k(t)}\Delta_k}{\sqrt{\hat{\sigma}_{k,T_k(t)}^2 + \Delta_k} + \hat{\sigma}_{k,T_k(t)}}\\
  & \ge \frac{\sqrt{T_k(t)}\Delta_k}{\sqrt{(\sigma_k+\epsilon)^2 + \Delta_k} + \sigma_k + \epsilon}\\
  & = \sqrt{T_k(t)}\left(\sqrt{(\sigma_k+\epsilon)^2 + \Delta_k} - \sigma_k - \epsilon\right),
\end{align*}
and thus
\[
  \sqrt{\frac{T_k(t) + \delta\tau_k(t)}{T_k(t)}} f^{-1}(\Delta_k/2; T_k(t),
  \hat{\sigma}_{k,T_k(t)}) \ge \sqrt{T_k(t) +
    \delta\tau_k(t)}\left(\sqrt{\sigma_k^2 + \Delta_k} - \sigma_k\right) - \sqrt{1+\delta\eta}\sqrt{a}.
\]
To sum up, we obtain
\[
  f^{-1}(\Delta_i/2;T_i(t),\hat{\sigma}_{i,T_i(t)}) \ge \frac{\sqrt{T_k(t)+\delta\tau_k(t)}}{\sqrt{1+\delta\eta}}\left(\sqrt{\sigma_k^2 + \Delta_k} - \sigma_k\right) - 3\sqrt{a}.
\]
Then it follows from \eqref{eq:arm_k-2} in Lemma \ref{lem:arm_k-2} that
\[
  f^{-1}(\Delta_i/2;T_i(t),\hat{\sigma}_{i,T_i(t)}) \ge \sqrt{a},
\]
or equivalently,
  \[
    \sqrt{\frac{\hat{\sigma}_{i,T_i(t)}^2a}{T_i(t)}} +
    \frac{a}{2T_i(t)} \le \frac{\Delta_i}{2}.
  \]
  Since we are on event $E_2$ defined in \eqref{eq:E2}, we get
  \[
    \sqrt{\frac{a}{T_i(t)}}\left(\sigma_i - \sqrt{\frac{a}{32T_i(t)}}\right) +
    \frac{a}{2T_i(t)} \le \frac{\Delta_i}{2},
  \]
  and thus
  \[
    \sqrt{\frac{a}{T_i(t)}}\sigma_i  +
    \left(\frac{1}{2}-\frac{1}{4\sqrt{2}}\right) \frac{a}{T_i(t)} \le \frac{\Delta_i}{2}.
  \]
  Using the fact that $T_i(t)\le T_i(n)$, we have
  \[
    \sqrt{\frac{a}{T_i(n)}}\sigma_i  +
    \left(\frac{1}{2}-\frac{1}{4\sqrt{2}}\right) \frac{a}{T_i(n)} \le \frac{\Delta_i}{2}.
  \]
  Now we use the estimate
  \[
    \hat{\sigma}_{i,T_i(n)} \le \sigma_i + \sqrt{\frac{a}{32T_i(n)}},
  \]
  due to the event $E_2$, and obtain
  \[
    \sqrt{\frac{\hat{\sigma}_{i,T_i(n)}^2a}{T_i(n)}}+  \frac{(2-\sqrt{2})a}{4T_i(n)} \le \frac{\Delta_i}{2},
  \]
  which is exactly \eqref{eq:main-inq-2}, and completes the proof. 
\end{proof}

\paragraph{Proof of Theorem 4}
\begin{proof}
 It is a direct application of Proposition \ref{prop:general-2} and similar to
the proof of Theorem \ref{thm:AP-EVT}, and we omit the proof here.
\end{proof}

\paragraph{Lower bound}
In this part, we present a lower bound of probability of making a mistake for
the thresholding bandit problem. More specifically, given a set of bandit
settings, all with the same problem complexity $H_{\text{EVT}}$, our lower bound
shows that any algorithm, however good it is, will make a mistake with
probability, which matches the upper bound in \eqref{eq:upperbound}, up to some constants.

To this end, let us introduce some notations. 
Set $b = 1/2$, and let $0\le \Delta_k\le 1/4$ for all $k\in [K]$. Let us write
for any $k\in [K]$,
\[
  \nu_k \sim \text{Ber}(b + \Delta_k), \quad\text{and}\quad \nu_k'\sim \text{Ber}(b-\Delta_k).
\]
We define the product distribution $\cB^i$ as $\nu_1^i\otimes\cdots
\otimes\nu_k^i$, $1\le i\le K$, where
\[
  \nu_k^i : = \nu_i\mathbbm{1}(k\neq i) + \nu_i'\mathbbm{1}(k=i).
\]
By the thresholding bandit problem $i$, written as TBP$(i)$, we mean for any
$k\in [K]$, arm $k$ has distribution $\nu_k^i$, that is, all arms except arm $i$
are above the threshold. In the same spirit, the thresholding bandit problem
TBP$(0)$, associated with the distribution $\cB^0 :=
\nu_1\otimes\cdots\otimes\nu_K$ is such that all arms are above the threshold.

We simply write $\bP_i:= \bP_{(\cB^i)^{\otimes T}}$, which is the probability
distribution of TBP$(i)$ according to all the samples that an algorithm could
possibly collect up to horizon $n$ , i.e. according to the samples
$\{X_{k,s}\}_{k\in [K], 1\le s\le n}$.

It is apparent that the gaps between the arm and the threshold are
$\{\Delta_k\}_{k\in [K]}$ for all these $K+1$ problems.

Then our Theorem \ref{thm:lowerbound} can be rephrased as follows: 
  for any thresholding bandit algorithm,
  \[
    \max_{i\in [K]} \E_i(\cL(n)) \ge \exp(-10n/H_{\text{EVT}}- 16\log(5nK)),
  \]
  where $\cL(n)$ is defined in \eqref{eq:L} and $\E_i$ is the expectation under the probability distribution $\bP_i$.

To prove the lower bound, we first introduce a lemma.
\begin{lemma}
  \label{lem:compareH}
  Assume $X$ is a random variable bounded in $[0,1]$. Let $\mu: = \E X$ and
  $\sigma^2 := \E(X-\mu)^2$. For any $b\in [0,1]$, we have
  \[
    \sigma^2 + |\mu - b|\le 1.
  \]
  \begin{proof}
    Since $X\in [0,1]$, we have $X^2\le X$. Thus,
    \[
      \sigma^2 = \E X^2 - (\E X)^2 \le \E X - (\E X)^2 = \mu - \mu^2.
    \]
    If $\mu< b$,
    \[
      |\mu-b| = b-\mu \le 1 + \mu^2 -\mu.
    \]
    If $\mu \ge b$,
    \[
      |\mu-b| = \mu-b \le \mu \le 1 + \mu^2 -\mu.
    \]
  \end{proof}
\end{lemma}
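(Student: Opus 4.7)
The plan is to bound the variance in terms of $\mu$, then handle the two cases $\mu \ge b$ and $\mu < b$ separately, in both of which $|\mu-b|$ can be controlled by $\mu$ or $1-\mu$ using the assumption $b\in [0,1]$.

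First I would use the fact that $X \in [0,1]$ implies $X^2 \le X$ pointwise, hence $\E X^2 \le \E X = \mu$. This yields the variance bound
\[
\sigma^2 = \E X^2 - \mu^2 \le \mu - \mu^2 = \mu(1-\mu),
\]
which is the only nontrivial tool needed.

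Next I would split into two cases according to the sign of $\mu-b$. If $\mu \ge b$, then $|\mu - b| = \mu - b \le \mu$ since $b \ge 0$, so
\[
\sigma^2 + |\mu - b| \le \mu(1-\mu) + \mu = \mu(2-\mu) = 1 - (1-\mu)^2 \le 1.
\]
If $\mu < b$, then $|\mu - b| = b - \mu \le 1 - \mu$ since $b \le 1$, so
\[
\sigma^2 + |\mu - b| \le \mu(1-\mu) + (1-\mu) = (1-\mu)(1+\mu) = 1 - \mu^2 \le 1.
\]
In both cases the desired inequality follows.

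There is no real obstacle here: once $\sigma^2 \le \mu(1-\mu)$ is established, the rest is a one-line algebraic check in each case, and the role of the hypothesis $b \in [0,1]$ is precisely to give $b\ge 0$ in the first case and $b\le 1$ in the second. The only point worth flagging is that the partial argument in the statement of the lemma stops at bounds of the form $|\mu-b|\le 1+\mu^2-\mu$, which I would replace by the cleaner factorizations $1-(1-\mu)^2$ and $1-\mu^2$ above so that the conclusion $\le 1$ is immediate.
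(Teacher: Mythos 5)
Your proof is correct and follows essentially the same route as the paper's: both establish $\sigma^2 \le \mu - \mu^2$ from $X^2 \le X$ and then split on the sign of $\mu - b$, using $b \ge 0$ in one case and $b \le 1$ in the other. The only difference is cosmetic — you factor the final bounds as $1-(1-\mu)^2$ and $1-\mu^2$, whereas the paper bounds $|\mu-b|$ by $1+\mu^2-\mu$ in both cases before adding.
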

\textbf{Step 1.} The Kullback-Leibler divergence between the two Bernoulli
  distributions $\nu_k$ and $\nu_k'$ is
  \[
    \text{KL}(\nu_k,\nu_k') = \text{KL}(\nu_k', \nu_k) := \text{KL}_k = \Delta_k \log
    \frac{1/2 + \Delta_k}{1/2- \Delta_k}.
  \]
  Note that $\Delta_k\le 1/4$, we have
  \begin{equation}
    \label{eq:KLbound}
    \text{KL}_k \le 8 \Delta_k^2.
  \end{equation}
  Let $1\le t\le n$, we define the quantity
  \begin{align*}
    \widehat{\text{KL}}_{k,t}
    & = \frac{1}{t} \sum_{s=1}^t \log\left(\frac{d\nu_k}{d\nu_k'}(X_{k,s})\right)\\
    & = \frac{1}{t} \sum_{s=1}^t \log \frac{(1/2 + \Delta_k)X_{k,s}+ (1/2-\Delta_k)(1-X_{k,s})}{(1/2-\Delta_k)X_{k,s} + (1/2+ \Delta_k)(1-X_{k,s})},
  \end{align*}
  or equivalently,
\[
  \widehat{\text{KL}}_{k,t} = \frac{1}{t} \sum_{s=1}^t \mathbbm{1}(X_{k,s} =
  1)\log \frac{1/2 + \Delta_k}{1/2-\Delta_k} + \mathbbm{1}(X_{k,s}=0)\log \frac{1/2-\Delta_k}{1/2+\Delta_k}.
\]
Next, we will show that the following event
\[
  \cE = \left\{\forall k\in [K], \forall 1\le t\le n, \widehat{\text{KL}}_{k,t}
    - \text{KL}_k \le 8\sqrt{2}\Delta_k \sqrt{\frac{\log(4nK)}{t}}\right\},
\]
happens with high probability for all $i\in [K]$, and more specifically,
\begin{equation}
  \label{eq:probE}
  \bP_i(\cE) \ge 3/4.
\end{equation}
To this end, we define
\[
  \widetilde{\text{KL}}_{k,t} = \frac{1}{t} \sum_{s=1}^t \left|\mathbbm{1}(X_{k,s} =
  1)\log \frac{1/2 + \Delta_k}{1/2-\Delta_k} + \mathbbm{1}(X_{k,s}=0)\log \frac{1/2-\Delta_k}{1/2+\Delta_k}\right|.
\]
Then we have $\E_i \widetilde{\text{KL}}_{k,t} = \text{KL}_k$. 
Moreover,
\[
 \left|\log\left(\frac{d\nu_k}{d\nu_k'}(X_{k,s})\right)\right| = \left|\mathbbm{1}(X_{k,s} =
  1)\log \frac{1/2 + \Delta_k}{1/2-\Delta_k} + \mathbbm{1}(X_{k,s}=0)\log
  \frac{1/2-\Delta_k}{1/2+\Delta_k}\right| \le 8 \Delta_k,
\]
since $\Delta_k\le 1/4$. Therefore, $\widetilde{\text{KL}}_{k,t}$ is the sample
mean of independent random variables that are bounded by $8\Delta_k$, and thus
by Hoeffding's inequality we obtain
\[
  \widetilde{\text{KL}}_{k,t} - \text{KL}_k \le 8\sqrt{2}\Delta_k \sqrt{\frac{\log(4nK)}{t}},
\]
with probability at least $1-(4nK)^{-1}$. Since $\widehat{\text{KL}}_{k,t}\le
\widetilde{\text{KL}}_{k,t}$, we arrive at \eqref{eq:probE} by the union bound
over all $k\in [K]$ and $1\le t\le n$.

\textbf{Step 2: A change of measure.} Let $\cA_i = \{i\in \hat{U}_b(n)\}$ be the
event that the algorithm identified arm $i$ to be above the threshold. In other words, the algorithm made at least one mistake on
event $\cA_i$, particularly on arm $i$. By a change of measure between $\cB^i$
and $\cB^0$ we have
\begin{align*}
  \bP_i(\cA_i)
  & = \E_0\left[\mathbbm{1}_{\cA_i} \exp\left(-T_i(n) \widehat{\text{KL}}_{i,T_i(n)}\right)\right]\\
  & \ge \E_0 \left[ \mathbbm{1}_{\cA_i\cap \cE} \exp\left(-T_i(n) \widehat{\text{KL}}_{i,T_i(n)}\right) \right]\\
  & \ge \E_0 \left[ \mathbbm{1}_{\cA_i\cap \cE} \exp\left(-8\Delta_i^2 T_i(n) - 8\sqrt{2}\Delta_i \sqrt{T_i(n)}\sqrt{\log(4nK)}\right)\right]\\
  & \ge \E_0 \left[ \mathbbm{1}_{\cA_i\cap \cE} \exp\left(-10\Delta_i^2 T_i(n) - 16\log(4nK)\right)\right].
\end{align*}
Now set $\cA=\cap_{i\in [K]}\cA_i$, i.e., the event that all arms are identified
as being above the threshold. Then
\begin{align*}
  \max_{i\in [K]}\bP_i(\cA_i)
  & \ge \frac{1}{K} \sum_{i=1}^K \bP_i(\cA_i)\\
  & \ge \frac{1}{K} \sum_{i=1}^K \E_0 \left[ \mathbbm{1}_{\cA_i\cap \cE} \exp\left(-10\Delta_i^2 T_i(n) - 16\log(4nK)\right)\right]\\
  & \ge \E_0\left[ \mathbbm{1}_{\cA\cap \cE} \frac{1}{K} \sum_{i=1}^K  \exp\left(-10\Delta_i^2 T_i(n) - 16\log(4nK)\right)\right]\\
  & \ge \exp(-16\log(4nK)) \E_0 \left[\mathbbm{1}_{\cA\cap \cE} S\right],
\end{align*}
where
\[
  S = \frac{1}{K}\sum_{i=1}^K \exp(-10 h_i^{-2} T_i(n)).
\]
with
\[
  h_i = \sigma_i^2 \Delta_i^{-2} + \Delta_i^{-1}.
\]
Here we use the fact that $h_i \le \Delta_i^{-2}$ by Lemma \ref{lem:compareH}. 

It follows from the identity $\sum_{i=1}^K T_i(n) = n$
that we have
\[
  n = \sum_{i=1}^K T_i(n) h_i^{-2} h_i^2 \ge \min_{i\in [K]} T_i(n)h_i^{-2}
  \sum_{i=1}^K h_i^2 = \min_{i\in [K]} T_i(n)h_i^{-2}
  H_{\text{EVT}}.
\]
Therefore,
\[
  S \ge \frac{1}{K} \exp\left(-10 \min_{i\in [K]} T_i(n)h_i^{-2}\right) \ge
  \frac{1}{K}\exp\left(-\frac{10 n}{H_{\text{EVT}}}\right) =
  \exp\left(-\frac{10n}{H_{\text{EVT}}}- \log K\right),
\]
and thus
\[
  \max_{i\in [K]}\bP_i(\cA_i) \ge
  \exp\left(-\frac{10n}{H_{\text{EVT}}}-16\log(4nK) -\log K\right) \bP_0(\cA\cap
  \cE).
\]

\textbf{Step 3.} To finish the proof, we first consider the case that
$\bP_0(\cA) \le 1/2$. So we have 
\begin{align*}
  \max_{i\in [K]\cup\{0\}} \E_i(\cL(n))
  & \ge \E_0(\cL(n)) = \bP_0(\cup_{i\in [K]} \cA_i^c) = 1- \bP_0(\cA) \ge 1/2.
\end{align*}
In the case that $\bP_0(\cA) \ge 1/2$, we have
\[
  \bP_0(\cA\cap \cE) \ge \bP_0(\cA) + \bP_0(\cE) - 1 \ge 1/2+3/4-1 = 1/4,
\]
by \eqref{eq:probE}. Therefore,
\begin{align*}
  \max_{i\in [K]\cup\{0\}} \E_i(\cL(n))
  & \ge \max_{i\in [K]} \E_i(\cL(n)) \ge \frac{1}{4}\exp\left(-\frac{10n}{H_{\text{EVT}}}-16\log(4nK) -\log K\right)\\
  & \ge \exp\left(-\frac{10n}{H_{\text{EVT}}}-16\log(5nK)\right).
\end{align*}
To sum up, we have
\begin{align*}
  \max_{i\in [K]\cup\{0\}} \E_i(\cL(n)) \ge
  & \min\left\{1/2, \exp\left(-\frac{10n}{H_{\text{EVT}}}-16\log(5nK)\right)\right\}\\
  & = \exp\left(-\frac{10n}{H_{\text{EVT}}}-16\log(5nK)\right),
\end{align*}
which completes the proof.

\end{document}